\documentclass{article}
\usepackage{arxiv}
\usepackage[utf8]{inputenc} 
\usepackage[T1]{fontenc}    
\usepackage{hyperref}       
\usepackage{url}            
\usepackage{booktabs}       
\usepackage{amsthm}
\usepackage{amsfonts}       
\usepackage{nicefrac}       
\usepackage{microtype}      
\usepackage{lipsum}
\usepackage{amsmath}
\usepackage{graphicx}
\usepackage{natbib}
\usepackage{doi}
\usepackage{amssymb}
\usepackage{subcaption}  
\newtheorem{theorem}{Theorem}
\newtheorem{lemma}{Lemma}

\usepackage{xcolor}
\usepackage{bm}
\usepackage{algpseudocode}  
\usepackage{booktabs} 
\usepackage{makecell}
\usepackage{multirow}

\usepackage{amsthm}
\newtheorem{definition}{Definition}

\usepackage{amsmath, amssymb}
\usepackage{algorithm}
\usepackage{geometry}
\title{BVFLMSP : Bayesian Vertical Federated Learning for Multimodal Survival with Privacy}

\renewcommand{\shorttitle}{\textit{arXiv} Template}

\hypersetup{
pdftitle={A template for the arxiv style},
pdfsubject={q-bio.NC, q-bio.QM},
pdfauthor={David S.~Hippocampus, Elias D.~Striatum},
pdfkeywords={First keyword, Second keyword, More},
}

\begin{document}

\author{
Abhilash Kar$^{1}$\thanks{First Author and Second Author contributed equally to this work.}
\quad Basisth Saha$^{2}$\footnotemark[1]
\quad Tanmay Sen$^{1}$\thanks{Corresponding author's email sentanmay518@gmail.com}
\quad Biswabrata Pradhan$^{1}$
}


\maketitle

\begin{center}
$^{1}$SQC and OR Unit, Indian Statistical Institute, Kolkata, India\\
$^{2}$Department of Statistics, Presidency University, Kolkata, India\\

\end{center}

\begin{abstract}

    Multimodal time-to-event prediction often requires integrating sensitive data distributed across multiple parties, making centralized model training impractical due to privacy constraints. At the same time, most existing multimodal survival models produce single deterministic predictions without indicating how confident the model is in its estimates, which can limit their reliability in real-world decision making. To address these challenges, we propose BVFLMSP, a Bayesian Vertical Federated Learning (VFL) framework for multimodal time-to-event analysis based on a Split Neural Network architecture. In BVFLMSP, each client independently models a specific data modality using a Bayesian neural network, while a central server aggregates intermediate representations to perform survival risk prediction. To enhance privacy, we integrate differential privacy mechanisms by perturbing client side representations before transmission, providing formal privacy guarantees against information leakage during federated training.
    We first evaluate our Bayesian multimodal survival model against widely used single modality survival baselines and the centralized multimodal baseline MultiSurv. Across multimodal settings, the proposed method shows consistent improvements in discrimination performance, with up to 0.02 higher C-index compared to MultiSurv. We then compare federated and centralized learning under varying privacy budgets across different modality combinations, highlighting the tradeoff between predictive performance and privacy. Experimental results show that BVFLMSP effectively includes multimodal data, improves survival prediction over existing baselines, and remains robust under strict privacy constraints while providing uncertainty estimates.

\end{abstract}

\keywords{Survival Analysis, Multimodal, Vertical Federated Learning,  Bayesian, Differential Privacy}

\section{Introduction}

Uncertainty is present in almost every real world decision making process. In predictive modeling, especially in medical and industrial applications, it is important not only to make accurate predictions but also to understand how confident a model is in those predictions. This is particularly critical in high stakes settings such as medical prognosis, where decisions can directly affect patient outcomes, and in industrial domains, where equipment reliability impacts the production process. 
Diseases such as cancer account for nearly 10 million deaths worldwide each year, making accurate time to event prediction essential for treatment planning and risk assessment.
Similarly, sudden failures of industrial equipment can significantly disrupt production processes, therefore reliability assessment and early maintenance strategies are necessary to prevent such breakdowns.
For that, survival analysis is important to predict the time until critical events occur, enabling decisions regarding patient treatment strategies and the scheduling of industrial equipment maintenance. 
The rapid adoption of machine learning and deep learning has led to major advances in predictive modeling. However, early machine learning and deep learning models produce point estimation and do not explicitly model uncertainty, often resulting in wrong overconfident point estimations. In predictive analysis, uncertainty is commonly categorized into two types: aleatoric uncertainty, which arises from inherent noise and randomness in the data, and epistemic uncertainty, which arises from limited knowledge about the model. While aleatoric uncertainty cannot be reduced, epistemic uncertainty can be mitigated as more data and better models become available. Bayesian neural networks provide a principled framework to model both types of uncertainty and produce uncertainty aware predictions.


In both the medical and industrial domains, the focus extends beyond merely identifying or classifying types of failures. A critical concern lies in assessing the corrective decisions regarding an industrial or medical process, analyzing the risk factors influencing the durability or survivability of subjects under study or understanding the nature of associated characteristics like the recurrence of a disease, etc. To address these problems, we move beyond classification models and use survival analysis.
There are various methods used in survival analysis, including traditional methods such as the Kaplan–Meier (\cite{kaplan1958nonparametric}) and Nelson–Aalen estimators, semi-parametric models such as the Cox proportional hazards model (\cite{cox1972regression}), Accelerated Failure Time (AFT) models, and machine learning approaches such as random survival forests (\cite{ishwaran2008random}) and boosting-based methods (\cite{binder2008allowing}). 


Apart from that, several deep learning based survival models have been developed, including DeepSurv \cite{katzman2018deepsurv}, DeepHit \cite{lee2018deephit}, DeepAFT \cite{norman2024deepaft}, NNet-Survival \cite{gensheimer2019scalable}, and FSL-BDP \cite{amed2026fsl}. These models improve predictive performance by capturing complex non-linear relationships in the data. However, most of these approaches are designed for single-modality inputs and do not naturally support the integration of heterogeneous data sources. 

Modern studies focus on extracting meaningful information from various types of data, such as tabular data, text data, image data, sensor data, signal data, genomic and molecular data, for further research and development \cite{giannakos2019multimodal}. 
Several methodologies have been developed for the analysis of multimodal data, including encoders to process data from different modalities into a common latent space, various fusion methodologies have been developed \cite{pawlowski2023effective} including concatenation based fusion methods\cite{liang2019mcfnet}, attention based fusion methods\cite{hori2017attention}, canonical correlation analysis for multimodal fusion\cite{zhang2021feature}, etc. Effectively including such multimodal data is essential for accurate and reliable survival prediction. To address the limitations of single-modality survival models, prior studies have introduced multimodal approaches such as Multisurv(\cite{vale2020multisurv}), DeepMMSA(\cite{wu2021deepmmsa}), MMsurv(\cite{yang2025mmsurv}). While these models improve predictive performance, high-stakes clinical decision-making requires more than accurate point estimates. It also requires the ability to quantify uncertainty in the predictions, which can help reduce the risk of incorrect decisions.

The increasing use of machine learning and deep learning in medical, industrial, and other domains has highlighted the need for collaborative and privacy preserving modeling approaches. Federated learning addresses this need by enabling multiple clients and a central server to train models jointly without sharing raw data. Depending on how data are distributed across parties, federated learning can be categorized into horizontal and vertical settings.

Although federated learning has been explored for survival analysis, existing studies have mainly focused on horizontal data partitioning. In contrast, vertical federated learning for survival analysis remains largely underexplored, particularly in the presence of uncertainty modeling and explicit privacy protection. Moreover, while federated learning keeps raw data at the client side, it does not inherently defend against adversarial attacks, such as information leakage from intermediate representations. Therefore, additional privacy mechanisms are necessary to mitigate these risks and strengthen privacy guarantees in vertical federated learning. Motivated by these gaps, we propose a privacy preserving and uncertainty aware framework that addresses the following challenges:
    \begin{itemize}
    \item We implement a Split Neural Network(SplitNN) approach based Vertical Federated Learning (VFL) model for survival studies of multimodal data of cancer patients, where each data specific sub model is a client and the server contains labels file, fusion layer, final prediction head, and optimizer.
    \item We propose a Bayesian framework for the entire structure of our VFL model with the motivation to establish a self aware model which can account for the uncertainties in its predictions.
    \item We propose a client-side defense mechanism in SplitNN-based vertical federated learning by adding differential privacy noise to embedding representations before transmission to the server, with the goal of reducing the risk of sensitive information leakage through intermediate feature representations.
    \item We provide a theoretical analysis of the proposed privacy-preserving vertical federated learning framework, including convergence guarantees under differential privacy noise and a formal characterization of the ($\epsilon, \delta$)-privacy budget.
    \item We evaluate the proposed method on multimodal cancer survival datasets and compare it with established single-modality, multimodal, centralized, and federated baselines under varying privacy budgets, demonstrating robust predictive performance and stable convergence in privacy-constrained settings.
\end{itemize}
\section{Related Work}
\paragraph{Survival Analysis Using Machine Learning or Deep Learning:} Predictive analysis has been a prominent area of research and subsequent development through frequent introductions of advancements through methodologies that cover the gaps left behind by the methods prior to them. This realm of studies contains survival analysis or time-to-event analysis, which deals with estimation that includes truncated, censored, unorganized data and thus demands for methods that can afford to deal with these issues and learn the patterns within the data effectively. Early models consisted of non-parametric models like Kaplan-Meier estimator(\cite{kaplan1958nonparametric}), which is relevant even to this day and is used as baseline model for comparison, later on semi-parametric models were developed, like Cox-Proportional Hazards model(\cite{cox1972regression}), the decade of \(1990s\) was the time when the seed of neural network methods in survival analysis was sown (\cite{faraggi1995neural}, \cite{liestol1994survival}, \cite{brown1997use}), although these early models were not as deep or to say shallow enough to miss much of the underlying data patterns. The decades of \(2000s\) and \(2010s\) brought about an influx of deep learning and machine learning methods for survival analysis like the Random Survival forests(\cite{ishwaran2008random}), boosting based models(\cite{binder2008allowing}),etc. In addition to that, neural network based models with varying approaches were drawn to the implementation on survival analysis, like the cox based feedforward neural network model of DeepSurv in \cite{katzman2018deepsurv}, discrete time models like nnetsurvival in \cite{gensheimer2019scalable}, discrete time non parametric models like DeepHit \cite{lee2018deephit}, the deep learning adaptation of accelerated failure time model in \cite{norman2024deepaft}, etc. These models offer better heterogeneity and more flexibility, ensuring better performance by overcoming the bottlenecks left behind by the previous models. 
\paragraph{Multimodal Survival Analysis:} 
Modern survival analysis demands the inclusion of complex multimodal data for increased precision. In a survival study for decisive medical prognosis, the critical patients are often associated with different kinds of crucial data, this includes organized text data or clinial data capturing patient's basic health parameters (age, gender, blood pressure, sugar level, etc), pathological laboratory images and reports (such as ECG, MRI) and omics data (such as gene and microRNA expression and DNA methylation).  
In industrial settings, multimodal data consist of sensor readings, images, machine generated signals, operational logs, environmental measurements (such as temperature, humidity, and pressure) to monitor, analyze, and facilitate decision making, an example of such a model can be seen in \cite{al2019multimodal}.
Several Cox-based models are established among which around some of them have the capability to deal with multimodal data(example: DeepConvSurv \cite{7822579}). Several discrete time models also gained significance lately, these models offer more heterogeneity than Cox-based models, some examples of discrete time models which can also deal with multimodal data are Multisurv(\cite{vale2020multisurv}), DeepMMSA(\cite{wu2021deepmmsa}), MMsurv(\cite{yang2025mmsurv}), etc. The frequentist nature of these models raises the need for implementing self-aware Bayesian Multimodal model for survival analysis, but at the time of writing this paper, not much significant work has been done in this regard. 
\paragraph{Survival Analysis using Bayesian Neural Network:} The upsurge of the use of neural networks in estimation problems brought along the the idea of implementing the concept of Bayesian inference in it, thus giving birth to Bayesian neural networks. The use of Bayesian neural networks for survival analysis also saw simultaneous increase along the late \(1990s\) (\cite{bakker1999neural}), later on quite a number of research papers were released with the implementation of Bayesian neural networks in various kinds of survival data(on gastric cancer patients(\cite{kangi2018predicting}), Deep Bayesian survival analysis of rail useful lifetime(\cite{zeng2023deep}), etc), various significant studies include different approaches like Bayesian neural network based individual survival distribution(\cite{10158019}), Bayesian deep neural network for survival analysis using pseudo values(\cite{feng2021bdnnsurv}), etc. Bayesian neural networks have proved its significance in this high stake study of survival analysis due to its property of uncertainty quantification which is an effective way of getting far more reliable estimations. 


\paragraph{Vertical Federated Learning:} In many cases, hospitals or medical institutes may lack the facilities to conduct all the necessary diagnostic procedures for patients. Consequently, patient's data is distributed among multiple sources such as laboratories, hospitals, pathological labs and clinics. However, due to privacy concerns,  these institutes cannot share the raw data of patients with one another. In this scenario, to collaboratively train a model, Vertical Federated Learning is used where these individual clients(hospitals, pathological labs, clinics) train a local sub model to extract embedding representation from their corresponding data and then they send these embedding representations to a server for further aggregation, fusion of the embedding representations and for final prediction.
Vertical federated learning  has been an area of active research, with algorithms using linear regression(\cite{gascon2016privacy}), using tree based algorithms(\cite{wu2020privacy}), graph neural network based VFL(\cite{chen2020vertically}).     

\paragraph{Differential Privacy:} Privacy and security has been a growing concern with the emergence of collaborative model training environment like that of federated learning, subsequently several algorithms have been developed to ensure privacy which can be broadly classified into homomorphic encryption(\cite{yi2014homomorphic}), secure multi-party computation and differential privacy. Our interest in this paper lies in differential privacy(\cite{abadi2016deep}), which offers an edge over the other two aforementioned methods in terms of computational ease, but that comes with a tradeoff between privacy and model's performance. Several works and subsequent approaches have been developed for differential privacy, a hybrid differentially private vertical federated learning model is proposed in \cite{wu2020privacy}, another paper \cite{xu2021achieving} proposes the use of differential privacy in vertically separated multi-party learning. Differential privacy has been proved to be effective in privacy preservation along with computational ease, with the only downside being the tradeoff with model's performance, so it is important to find the appropriate amount of noise and build the model's framework  accordingly.

\section{Problem Statement}

In this paper, our main objective is to train a discrete time survival model that predicts interval wise risk probabilities of cancer patients. To do that we extract information from multimodal data, including clinical data, microRNA expression, and DNA methylation. Data of all the modalities may not be available in a single location, as it is often distributed across various sources, such as pathological laboratories, hospitals, clinics, etc, denoted as local clients for all patients. We have total $N$ number of patients. The data can be written as

\begin{equation*}
\mathcal{D} = \{(X_1, T_1, \mathcal{I}_1),  (X_2, T_2, \mathcal{I}_2), \ldots, (X_N, T_N, \mathcal{I}_N)\},   
\end{equation*}
    

where $X_i$ represents the covariate vector of $i$-th patient, which can be written as $X_i = \{X_{i1}, X_{i2}, \ldots , X_{id}\}$, and $d$ is the dimension of the feature vector. Suppose that the pair $y_i =(T_i, \mathcal{I}_i)$ represents the survival outcome, where $T_i$ denotes the observed time of an event of interest for the $i$-th patient and $\mathcal{I}_i$ is the event indicator, where $\mathcal{I}_i = 1$ indicates an observed event and $\mathcal{I}_i = 0$ indicates right censoring. \\
The dataset is further partitioned into $D_x$, representing the set of covariate vectors for N patients, and $D_y$, containing the $(T_i, \mathcal{I}_i)$ pairs, with both $D_x$ and $D_y$ linked through corresponding patient IDs. 

\begin{equation*}
    D_x = \{X_1 , X_2, ... , X_N\}, \quad
    D_y = \{y_1, y_2, \ldots, y_N\}.
\end{equation*}

We consider that the data for each patient is distributed in multiple locations. Let $M$ denote the number of clients. Each client contains a subset of covariates corresponding to a specific modality, and all clients share data for the same set of patients.
So, we represent our feature vector of $i$-th patient as

\[
X_i = \{X_i^{(1)}, X_i^{(2)}, ..., X_i^{(M)}\}. 
\]
\[
X_i^{(k)}  = \{X_{i1}^{(k)}, X_{i2}^{(k)}, \ldots, X_{id^k}^{(k)} \}, \quad \sum_{k=1}^M d^k = d.
\]

For extracting information from the multimodal data distributed in various locations, we construct a vertical federated learning (VFL) framework with $M$ clients and a central honest but curious server. Each client with data of a distinct modality for the same set of $N$ patients, train its  feature extractor model, say $f(\cdot)$ that maps the respective input data for $i$th patient's data for $k$th modality, say $X_i^{(k)}$ to a fixed size vector representation: \(E_{i}^{(k)}\) with dimension $d^{Embedding}$.
\[
E_i^{(k)} = f_k(X_i^{(k)}), \quad E_i^{(k)} = \{E_{i1}^{(k)}, E_{i2}^{(k)}, \ldots, E_{id^{Embedding}}^{(k)}\}.
\]




Clients send these representation vectors to the central server which contains the fusion layer, final prediction head, labels file and optimizer. For $i$th patient the central server receives the embeddings from all the clients represents as 

\[
E_i = \{E_{i}^{(1)}, E_{i}^{(2)}, \ldots, E_{i}^{(M)}\}.
\]


The fusion layer in the server fuses $E_i$ into a single compact vector $z_i$ for passing through the final prediction head to ultimately get interval wise survival probabilities. The labels file contains $(T_i, \mathcal{I}_i)$ pairs of the patients along with the corresponding patient IDs. This is used to calculate loss and subsequently the gradients which the server then sends back to the clients for locally updating their respecting sub model's parameters. The objective is to learn a multimodal survival model that estimates the conditional survival function

\[
S(t \mid z_i),
\]

where $S(t \mid \cdot)$ denotes the probability that the event time exceeds $t$, given all available modalities. Equivalently, the model may produce a risk score derived from this survival function.
We introduce differential privacy in our framework so that no attacker can infer sensitive information about patients. To guarantee privacy for patients, we add noise to the client embeddings before transmitting them to the server. Then the noisy embedding of $k$th client for $i$th patient is denoted as  
\[
E'_i = \{E_{i}^{'(1)}, E_{i}^{'(2)}, \ldots, E_{i}^{'(M)}\}, \quad E_i^{'(k)} = E_i^{(k)} + \xi^{(k)}_i. 
\]

where $\xi^{(k)}_i$ is the Gaussian noise added to the embedding of
client $k$ for patient $i$ with mean zero and variance $(\sigma_j^{k})^2$ of $j$-th element of the embedding. \\
Furthermore, survival analysis, especially in a setup for medical prognosis is a high stake study and demands a self aware model, so we leverage our VFL setup in a Bayesian framework, where all the client models, final prediction head and risk layer are Bayesian. So, the overall goal of the learning process is to jointly leverage all modalities under vertical data partitioning while satisfying two key requirements: (i) the ability to quantify predictive uncertainty, and (ii) protection against information leakage during federated training. The specific model architecture, learning strategy, uncertainty modeling, and privacy mechanisms are described in subsequent sections.

\section{Methodology}

\subsection{Survival Analysis}

Survival analysis studies the time until an event of interest occurs, such as death, disease progression, or system failure. It is widely used in medical prognosis, reliability analysis, and risk assessment. Let $T$ denote the continuous time-to-event random variable. The survival function is defined as
\[
S(t) = P(T > t) = 1 - F(t),
\]
where $F(t)$ is the cumulative distribution function of $T$. Survival analysis is often expressed in terms of the hazard function $h(t)$, which represents the instantaneous event rate at time $t$, given survival up to time $t$:
\[
h(t) = \lim_{\Delta t \to 0} \frac{P(t \le T < t + \Delta t \mid T \ge t)}{\Delta t}.
\]
The survival function can be written in terms of the hazard function as
\[
S(t) = \exp\left(-\int_0^t h(s)\, ds\right).
\]
For computational modeling, the continuous time axis is discretized into $\mathcal{J}$ disjoint intervals
\[
(0, t_1], (t_1, t_2], \ldots, (t_{\mathcal{J}-1}, t_\mathcal{J}].
\]
For subject $i$, the discrete-time hazard probability in interval $j$ is defined as
\[
h_{ij} = P(t_{j-1} \le T_i < t_j \mid T_i \ge t_{j-1}).
\]

The probability that subject $i$ survives beyond time $t_j$ is then
\[
S_{ij} = P(T_i > t_j) = \prod_{l=1}^{j} (1 - h_{il}).
\]

If subject $i$ experiences the event in interval $j$, the likelihood is given by
\[
\mathcal{L}_{ij} = h_{ij} \prod_{l=1}^{j-1} (1 - h_{il}).
\]
If the subject is right-censored at time $t_j$, the likelihood is
\[
\mathcal{L}_{ij} = \prod_{l=1}^{j} (1 - h_{il}).
\]

Let $j_i$ denote the interval corresponding to the observed or censored time of subject $i$. The log-likelihood for subject $i$ can be written as
\[
\mathcal{L}_i =
\mathcal{I}_i \left( \log h_{i j_i} + \sum_{l=1}^{j_i-1} \log (1 - h_{il}) \right)
+ (1 - \mathcal{I}_i) \sum_{l=1}^{j_i} \log (1 - h_{il}).
\]

Equivalently, the total log-likelihood over all $N$ subjects can be written as \cite{singer1993s}
\begin{equation}
\mathcal{L} = \sum_{i=1}^{N} \sum_{l=1}^{j_i}
\left[ \mathcal{I}_{il} \log h_{il} + (1 - \mathcal{I}_{il}) \log (1 - h_{il}) \right],
\label{likelihood_equation}
\end{equation}

Where $\mathcal{I}_{il}$ is a binary indicator function. $\mathcal{I}_{il}$ = 1, if $i$-th sample fails in $l$-th interval and $\mathcal{I}_{il}$ = 0, if does not fail. 

In our senario, time length is taken upto the maximum observed time $T_i$ among all the patients. The time length is divided into $p$ equal length interval. So, the total log-likelihood is written as 

\begin{equation}
\mathcal{L} = \sum_{i=1}^{N} \sum_{l=1}^{p}
\left[ \mathcal{I}_{il} \log h_{il} + (1 - \mathcal{I}_{il}) \log (1 - h_{il}) \right].
\label{lik_eq}
\end{equation}


In our setting, the observed time-to-event depends on multimodal covariates. To estimate the $h_{il}$ and corresponding survival function by incorporating the effects of these covariates, we adopt a multimodal survival analysis framework that extracts and fuses information from multimodal data.

\subsection{Multimodal Survival Analysis}

Recent advances in biomedical research have led to the emergence of high dimensional multimodal data, motivating their integration into survival analysis. Existing multimodal survival models, such as MultiSurv, DeepMMSA, and MMsurv, employ modality specific sub networks to extract features that are subsequently fused for risk prediction. However, these approaches are predominantly frequentist and lack principled uncertainty quantification.
In Figure \eqref{fig:placeholder}, we present the architecture of the proposed BVFLMSP model. In this framework, three modalities are considered as passive clients in a vertical federated learning setting, corresponding to clinical data, DNA methylation, and microRNA respectively. Each sub model is trained locally in respective clients, and the final embeddings are transmitted to the server. At the server, these embeddings are fused to pass through the final prediction head which further passes through the risk layer to get the final predictive outcome.

In problem statement section, we have explained the expression of multimodal data with $M$ modalities.

\[
X_i = \{X_i^{(1)}, X_i^{(2)}, ..., X_i^{(M)}\}. 
\]

Each modality is processed by a dedicated submodel $f_k(\cdot)$ acting as a feature extractor:

\[
\mathbf{E}^{(k)}_i = f_k(\mathbf{X}^{(k)}_i), \quad k = 1, \dots, M.
\]
For patient $i$, the modality-specific representations are collected as

\[
E_i = \{E_{i}^{(1)}, E_{i}^{(2)}, \ldots, E_{i}^{(M)}\}.
\]

A fusion mechanism, for our model Attention mechanism, $Attn(\cdot)$ aggregates these representations into a single latent embedding vector by assigning attention weights \(\alpha_k\) to all the individual embedding vectors \(E_{i}^{k}\) and combining them as:
\[
\mathbf{z}_i = Attn(E_i)=\underset{k=1}{\overset{M}{\sum}}\alpha_kE_{i}^{(k)}.
\]

The fused representation is passed through a prediction head to obtain discrete-time hazard probabilities:
\[
\mathcal{F}(\mathbf{z}_i) = \mathbf{h}_i,
\quad
\mathbf{h}_i = (h_{i1}, h_{i2}, \dots, h_{ip}).
\]
where $h_{ij}$ denotes the hazard probability for patient $i$ in the $j$-th time interval, and $p$ is the number of discretized time intervals.
The survival loss is defined by the negative log-likelihood loss as defined in \eqref{lik_eq}.
While existing multimodal survival models follow this general framework, they are largely frequentist in nature and do not explicitly model predictive uncertainty. To address this limitation, we propose a Bayesian multimodal survival model and further extend it to a vertical federated learning setting.

\subsection{Bayesian Neural Networks}

Bayesian neural networks (BNNs) are grounded in Bayesian inference, where probability is interpreted as a measure of belief rather than long-run frequency, as in the frequentist paradigm. While frequentist inference aims to provide frequency-based guarantees, Bayesian inference focuses on expressing and updating subjective beliefs through probability distributions \cite{box2011bayesian}. 

Let $\{\theta_1, \dots, \theta_M\}$ are the local client parameters and $\theta_s$ denote the parameters of the server-side model, including the fusion mechanism and the final survival prediction head.
\[\Phi = \{\theta_1, \dots, \theta_M, \theta_s\}.\]
In Bayesian inference, model parameters $\Phi$ are treated as random variables and assigned prior distributions $p(\Phi)$ encoding prior beliefs. Given a dataset, Bayes' theorem is used to compute the posterior distribution over parameters:
\[
p(\Phi \mid \mathcal{D}) = \frac{p(\mathcal{D} \mid \Phi)\, p(\Phi)}{p(\mathcal{D})},
\]
where $p(\mathcal{D}) = \int p(\mathcal{D} \mid \Phi) p(\Phi)\, d\theta$ is the marginal likelihood.

Uncertainty in parameter estimates is quantified using \emph{credible sets}. A $95\%$ credible set $C$ satisfies
\[
\int_{C} p(\Phi \mid \mathcal{D})\, d\Phi = 0.95.
\]

Bayesian neural networks \cite{neal2012bayesian} extend this framework to deep learning by placing prior distributions over network weights and biases and learning their posterior distributions given data. Unlike deterministic neural networks that yield point estimates, BNNs represent uncertainty in model parameters and predictions.

Let $p(\Phi)$ denote the prior over network parameters. The posterior distribution is given by

\[
p(\Phi \mid \mathcal{D}) =
\frac{p(\mathcal{D} \mid \Phi)\, p(\Phi)}
{\int p(\mathcal{D} \mid \Phi')\, p(\Phi')\, d\Phi'},
\]

To make predictions for a new input $X_{new}$, posterior predictive inference is performed by sampling parameters from the posterior distribution.$m$ is total number of samples generated from the posterior distribution. Specifically, for $i = 1, \dots, m$,
\[
\phi_i \sim p(\Phi \mid \mathcal{D}), \quad
y_i = \mathcal{F}_{\phi_i}(X_{new}),
\]
where $\mathcal{F}_{\phi}(\cdot)$ denotes the forward pass of the neural network with parameters $\phi$.

The collection of predictions $\mathcal{Y} = \{y_i\}_{i=1}^{m}$ forms a confidence set for the output value for the corresponding input data of the particular subject. This confidence set accounts for the uncertainty in the model's predictions.


\textbf{Note 1}(Bayesian regularization and robustness).
\textit{Bayesian models often exhibit improved robustness \cite{murphy2012machine} compared to unregularized frequentist models due to the implicit regularization introduced by prior distributions on model parameters.}

In standard non-Bayesian learning, model parameters are commonly estimated using maximum likelihood estimation (MLE),
\[
\widehat{\Phi}_{\mathrm{MLE}} = \arg\min_{\Phi} \left[ - \log p(\mathcal{D} \mid \Phi) \right].
\]

Although MLE is asymptotically unbiased under mild regularity conditions \cite{lehmann1998theory}, the entire error of the model becomes dependent on the estimator variance which can be harmful under finite-sample or noisy settings, leading to unstable parameter estimates.

In contrast, Bayesian models impose prior distributions over model parameters. Assuming independent zero-mean Gaussian priors for $M$ clients and central server,
\[
P(\theta_k) = \mathcal{N}(\theta_k \mid 0, \sigma^2)
= \frac{1}{\sqrt{2\pi\sigma^2}} \exp\!\left(-\frac{\theta_k^2}{2\sigma^2}\right), \quad
\theta_k \in \Phi,
\]
the joint prior becomes
\[
P(\Phi) = \prod_k \mathcal{N}(\theta_k \mid 0, \sigma^2).
\]
Taking logarithms,
\[
\log P(\Phi) = c - \frac{1}{2\sigma^2} \sum_k \theta_k^2,
\]
where \(c\) is a constant independent of \(\theta_k\).

Bayesian learning seeks the Maximum A Posteriori (MAP) estimate \cite{murphy2012machine}:
\[
\widehat{\Phi}_{\mathrm{MAP}} = \arg\max_{\Phi} P(\Phi \mid \mathcal{D})
= \arg\max_{\Phi} \big[ \log P(\mathcal{D} \mid \Phi) + \log P(\Phi) \big],
\]
which is equivalently written as
\begin{equation}
\widehat{\Phi}_{\mathrm{MAP}}
= \arg\min_{\theta}
\left[
- \log P(\mathcal{D} \mid \Phi)
+ \frac{1}{2\sigma^2} \sum_k \theta_k^2
\right].
\label{bayesian robustness}
\end{equation}
This objective corresponds to L2-regularized maximum likelihood estimation,
where the Gaussian prior induces weight decay. The regularization term penalizes large parameter magnitudes, reducing sensitivity to noise and perturbations in the observed data. While this introduces bias into the estimator, it substantially reduces variance. Under the bias variance trade off, this variance reduction often leads to improved stability, better calibration, and enhanced robustness in practice, particularly in noisy, small sample, vertically partitioned, or privacy constrained learning settings, as encountered in federated survival analysis with differential privacy. In Bayesian Neural Networks, the posterior distribution over model parameters is often intractable, so, using MAP helps in maintaining robustness, but fails to capture model uncertainty. To solve this problem, Kullback-Leibler(KL) divergence regularizer\cite{blundell2015weightuncertaintyneuralnetworks} is used, which helps in uncertainty quantification by learning a posterior distribution \(q(\Phi)\) for the model weights \(\Phi\), while keeping the model robust by forcing \(q(\Phi)\) to be not too far from the prior distribution \(p(\Phi)\).

\subsection{Kullback-Leibler Divergence Regularization}

We employ variational inference to approximate the intractable posterior distribution over model parameters. Consequently, the training objective includes a Kullback--Leibler (KL) divergence regularization term that penalizes deviation of the variational posterior from a prior distribution.

The KL divergence between an approximate posterior $q(\Phi)$ and a prior $p(\Phi)$ is defined as
\[
\mathrm{KL}(q(\Phi)\|p(\Phi))
=
\int q(\Phi)\log\frac{q(\Phi)}{p(\Phi)}\,d\Phi
=
\mathbb{E}_{q}[\log q(\Phi)] - \mathbb{E}_{q}[\log p(\Phi)].
\]

Assume a Gaussian prior over model parameters, $p(\theta_k) = \mathcal{N}(\mu_0, \sigma_0^2)$, and a Gaussian variational posterior, $q(\theta_k) = \mathcal{N}(\mu, \sigma^2)$. The KL divergence admits a closed-form expression:
\[
\mathrm{KL}\big(\mathcal{N}(\mu, \sigma^2)\,\|\,\mathcal{N}(\mu_0, \sigma_0^2)\big)
=
\log\frac{\sigma_0}{\sigma}
+
\frac{\sigma^2 + (\mu - \mu_0)^2}{2\sigma_0^2}
-
\frac{1}{2}.
\]


Thus, the regularization term penalizes large values of $(\mu - \mu_0)^2$, discouraging the posterior mean from drifting too far away from the prior mean and preventing excessively large model weights, which contributes to robustness. In addition, the term $\log(\sigma_0/\sigma)$ penalizes over-confident posterior distributions, as it diverges when $\sigma \to 0$. Conversely, the term $\sigma^2/(2\sigma_0^2)$ penalizes excessively uncertain (under-confident) posteriors, since it grows with increasing $\sigma^2$. Together, these terms enforce a balance between over-confidence and under-confidence, yielding an optimal level of uncertainty while maintaining model robustness. We add this KL divergence regularizer term to our model's overall loss function which is discussed in the next section.

\subsection{Loss Function}
The loss function of our model is combination of discerete time survival negative log-likelihood loss, KL divergence regularizer and auxiliary loss given by: 
\begin{equation}
    \mathcal{L}_{\text{total}}
    =
    \mathcal{L}_{\text{surv}}
    +
     \frac{1}{N_t} \sum_{\Phi} \mathrm{KL}\big(q(\Phi)\,\|\,p(\Phi)\big)
    +
    0.05\, \mathcal{L}_{\text{auxiliary}}.\label{total_loss}
\end{equation}
where \(N_t\) denotes the number of training samples, \(q(\Phi)\) is the variational posterior over model parameters, and \(p(\Phi)\) is the prior distribution.

The auxiliary loss encourages alignment between latent feature representations obtained from different modalities. For a pair of latent feature vectors \((z_i, z_j)\) corresponding to the same patient but extracted from two different modalities, the auxiliary loss is defined as
\[
\mathcal{L}_{\text{auxiliary}}(z_i, z_j)
=
1 - \cos(z_i, z_j),
\]
where \(\cos(\cdot, \cdot)\) denotes cosine similarity. The total auxiliary loss is computed as
\[
\mathcal{L}_{\text{auxiliary}}
=
\frac{1}{N_{\text{pairs}}}
\sum_{i<j}
\mathcal{L}_{\text{auxiliary}}(z_i, z_j),
\]
where the summation is taken over all modality pairs for each patient, and \(N_{\text{pairs}}\) denotes the total number of such pairs.

This auxiliary term promotes modality-invariant representations by encouraging feature embeddings from different modalities to lie close in the shared latent space, thereby facilitating more effective multimodal fusion.

\subsection{Vertical Federated Learning}

Vertical Federated Learning (VFL) \cite{liu2024vertical} is a federated learning paradigm in which multiple parties collaboratively train a model by holding different feature subsets corresponding to the same set of samples. In contrast to horizontal federated learning, data samples are aligned by identity across clients, while features are vertically partitioned. Raw features are never shared across parties. 

\begin{figure}[h]
    \centering
    \includegraphics[width=0.8\linewidth]{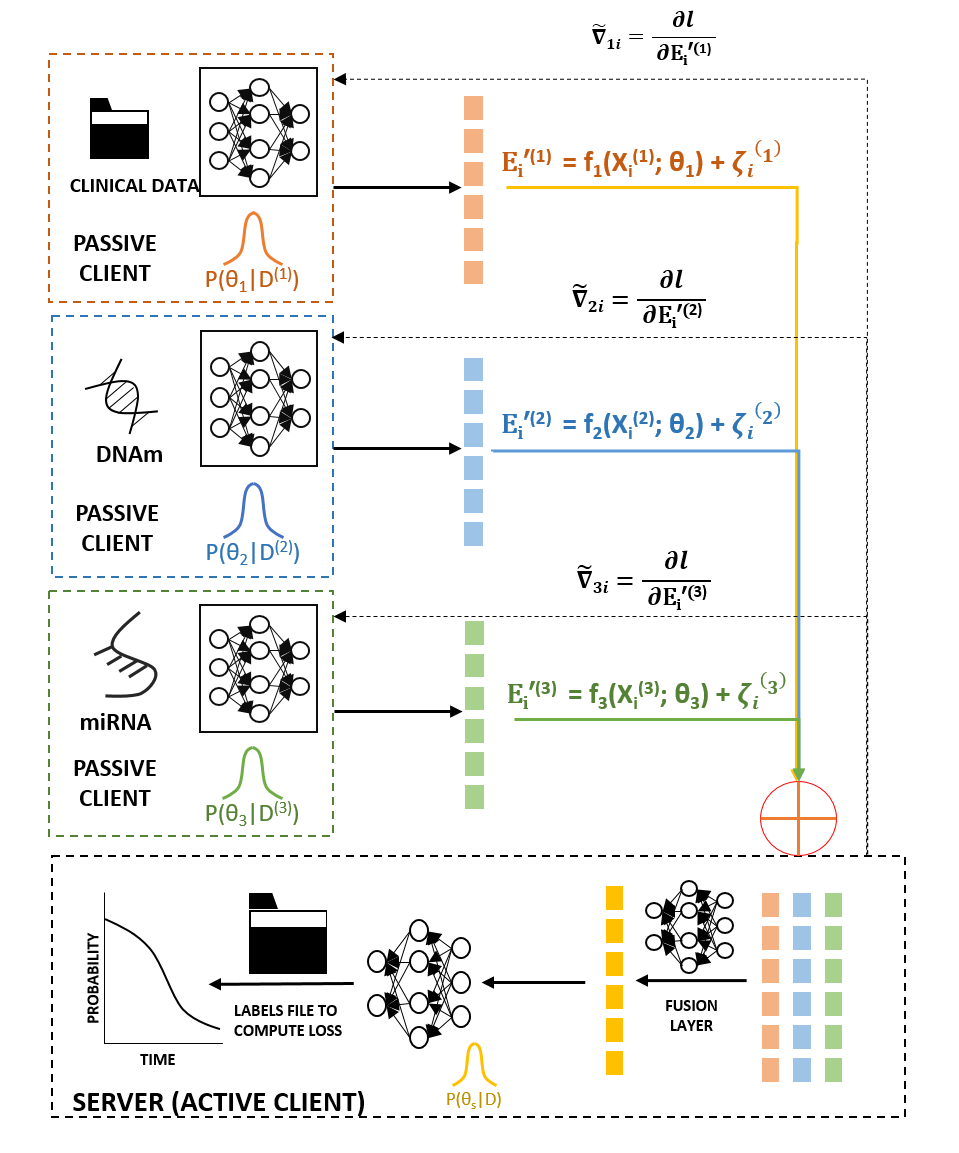}
    \caption{Architecture of BVFLMSP.}
    \label{fig:placeholder}
\end{figure}

In our framework, each data modality is treated as a passive client, while the central server acts as an active client, holding the survival labels as well as the fusion module and the final prediction head. Each client maintains a modality specific feature extractor that maps local features to latent representations. During training, clients transmit only these intermediate representations to the server, which aggregates them to produce survival predictions. Gradients with respect to the embeddings are then returned to the corresponding clients to enable end-to-end training. We further impose prior distributions over model parameters, resulting in a Bayesian vertical federated learning formulation. Figure~\ref{fig:placeholder} illustrates the  architecture of the proposed Bayesian VFL framework, BVFLMSP with three modalities. \\
At each training iteration, client \(k \in \{1,\dots,M\}\) computes the output of its local model on the input of $i$th patient and sends the embedding $E_i^{(k)}$ to the central server. Then fusion is performed on the embeddings and the fused layer $z_i$ passes through the final prediction head.
Let $\ell(\cdot,\cdot)$ denote the loss function \eqref{total_loss}. 
We consider the optimization problem,
\begin{equation*}
\min_{\theta_1, \dots, \theta_M, \theta_s}
\frac{1}{N} \sum_{i=1}^{N}
\ell \Big(
\mathcal{F}\big(z_i),
y_i
\Big).
\label{eq:objective}
\end{equation*}

The central server computes the gradient of the loss with respect to its parameters $\nabla_{\theta_s} \ell = \frac{\partial \ell}{\partial \theta_s}$, and updates \(\theta_s\) accordingly. It also computes the gradients of the loss with respect to each client embedding:
\[
\frac{\partial \ell}{\partial E^{(k)}}, \quad \forall k \in \{1,\dots,M\},
\]
and sends these gradients back to the corresponding clients. Upon receiving \(\frac{\partial \ell}{\partial E^{(k)}}\), each client \(k\) computes the
gradient of the loss with respect to its local model parameters:
\[
\nabla_{\theta_k} \ell
=
\frac{\partial \ell}{\partial \theta_k}
=
\sum_{i}
\frac{\partial \ell}{\partial E_i^{(k)}}
\frac{\partial E_i^{(k)}}{\partial \theta_k}.
\]
Each client then updates its local model parameters \(\theta_k\) accordingly. This
procedure is repeated iteratively until convergence.

In vertical federated learning, there is a risk of information leakage, either through the embeddings transmitted from the client to the server or through the gradients sent from the server back to the client. In the following section, we provide one such attack structure where the server tries to breach client privacy using their embedding outputs.

\paragraph{Attack Structure: }
We consider a feature reconstruction attack under an honest but curious server assumption, following optimization based data recovery attacks such as \cite{jin2021cafe}. The server strictly follows the vertical federated learning (VFL) protocol but attempts to infer clients’ private input features from the communicated intermediate representations. We adopt a black-box threat model in which the server has no access to the clients’ raw data or private model parameters, but observes the embedding outputs exchanged during training.

The server is assumed to have access to an auxiliary public dataset $D_{\text{public}}$ drawn from a distribution similar to the clients’ private data. Using this public data, the server trains a shadow VFl model with separate feature extractor models that mimic the client-side feature extractors in the VFL setting. Let $f_c(\cdot)$ denote the client model and $f_s(\cdot)$ denote the shadow model. The objective of the server is to learn $f_s$ such that
\[
f_s(\cdot) \approx f_c(\cdot),
\]
where the alignment is performed by matching the distribution of embeddings observed during VFL training using public samples as inputs.

For samples $X'_i \sim D_{\text{public}}$, the server obtains input embedding pairs by forwarding the public data through the shadow model:
\[
E'_i = f_s(X'_i).
\]
Using these pairs $(E'_i, x'_i)$, the server then trains a decoder network $g(\cdot)$ to reconstruct the original input features from the embeddings. The decoder is trained by minimizing the reconstruction loss
\[
\mathcal{L}_{\text{decoder}}(\theta)
=
\mathbb{E}_{x' \sim D_{\text{public}}}
\left[
\left\| g(E') - x' \right\|_2^2
\right],
\]
where $w$ denotes the decoder parameters, and the optimal parameters are obtained as
\[
\theta^* = \arg\min_\theta \mathcal{L}_{\text{decoder}}(\theta).
\]

After training, the decoder parameters are fixed. During the actual VFL training process, when the server receives true client embeddings $E_i = f_c(X_i)$, it forwards these embeddings through the trained decoder $g(\cdot)$ to obtain reconstructions $\hat{X}_i = g(E_i)$ of the clients’ private input features. This enables the server to recover approximations of the raw inputs, constituting a privacy breach. For ensuring privacy protection against this attack structure, we need a solid defense mechanism, the description of which is provided in the next section.

\paragraph{Defense Mechanism: }
Most existing studies on differential privacy in federated learning focus on perturbing either (i) the gradients transmitted from the server to clients \cite{ranbaduge2022differentiallyprivateverticalfederated} or (ii) the model parameters shared by clients for aggregation \cite{he2023clustered}. However, in SplitNN-based vertical federated learning, these mechanisms are insufficient for protecting client feature privacy.

In our setting, clients do not transmit model parameters to the server; instead, they communicate intermediate embedding outputs. Consequently, perturbing client-side model weights provides no privacy protection, as these parameters are never observed by the server. Similarly, server-side gradient perturbation is inadequate because the server receives clean embedding outputs during the early rounds of training. These unperturbed embeddings can be exploited by a malicious server to train a shadow model together with a decoder that learns to invert embeddings back to raw input features, enabling feature recovery attacks.

To address this fundamental vulnerability, we propose a client-side embedding perturbation mechanism in which differential privacy noise is directly added to the embedding outputs before transmission to the server. This ensures that the server never observes clean client representations, even in the initial training rounds, preventing it from learning a reliable inverse mapping from embeddings to raw input features and thereby reducing the risk of feature reconstruction and data recovery attacks. Furthermore, client side embedding perturbation eventually results in noisy gradients as shown in Equation \eqref{eq:noisy_grad}.

\subsection{Differential Privacy}

Differential privacy \cite{abadi2016deep} allows us to maintain privacy in the data. In our model, differential privacy is incorporated by adding noise to the input features of each client before training, thus providing privacy guaranties at the feature level and ensures that the contribution of any individual data point remains private while preserving the effectiveness of the collaborative training process. 

\begin{definition}
A randomized mechanism \cite{tran2023privacy} \( \mathcal{M} : D \to \mathbb{R} \) with domain
\( D \) and range \( \mathbb{R} \) said to satisfy \((\epsilon,\delta)\)-differential
privacy if, for any two adjacent datasets, \( \mathcal{D}, \mathcal{D'} \in D \) that differ in atleast one data sample, and for any subset
of outputs \( S \subseteq \mathbb{R} \), it holds that
\[
\Pr\bigl[ \mathcal{M}(\mathcal{D}) \in S \bigr]
\le
e^{\epsilon} \Pr\bigl[ \mathcal{M}(\mathcal{D'}) \in S \bigr] + \delta.
\]
\end{definition}

Where $\epsilon$ is the privacy budget that controls the privacy strength and $\delta$ represents the probability that the privacy loss exceeds $\epsilon$. A small $\epsilon$ provides stronger privacy protection, while a larger $\epsilon$ allows more accurate result but weaker privacy.

Following \cite{abadi2016deep} we introduce the following theorem. For the theorem, we take into account the following assumptions:

\textit{\textbf{Assumption 1:} We clip the embeddings by a constant C, consequently the embeddings strictly doesn't leave the clients with the \(L_2\) norm greater than C.\\
\textbf{Assumption 2:} The batches of data used in training are selected uniformly at random with replacement such that the probability of a particular data point to be in the batch is p, where p = (batch size)/(Training data set size) is sufficiently small.}\\

We have used the following lemmas from \cite{abadi2016deep} to facilitate the proof of our theorem:
\begin{lemma}
    In a Gaussian mechanism \(\mathcal{Q}\) with noise scale \(\sigma\) and batch sampling \(p\), privacy loss's log-moment (\(\alpha_{\mathcal{Q}_{i}}(\lambda)\)) is bounded as:
\begin{equation}
 \alpha_{\mathcal{Q}_{i}}(\lambda)\leq \frac{p^2(\lambda)(\lambda+1)}{(1-p)\sigma^2}+O\left(\frac{p^3}{\sigma^3}\right).
 \end{equation}
 For all i \(\in\) 1(1)\(\tau\).
\end{lemma}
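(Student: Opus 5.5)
This is Lemma 3 of Abadi et al.\ (2016), the moments-accountant bound for the sampled Gaussian mechanism. I would reproduce that argument for the embedding-perturbation setting of this paper.

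\textbf{Reduction to one coordinate.} Assumption 1 bounds the $L_2$-sensitivity of a single patient's contribution by $C$. After rescaling by $C$ (so that noise $\sigma C$ becomes $\sigma$), adjacent datasets differ in one summand of norm at most $1$. By rotational invariance of isotropic Gaussian noise, the worst case is one-dimensional. Under Assumption 2, each record enters the batch independently with probability $p$. So for adjacent $\mathcal{D},\mathcal{D}'$ the two output laws are
\[
\mu_0=\mathcal{N}(0,\sigma^2), \qquad \mu_1=\mathcal{N}(1,\sigma^2), \qquad \mu=(1-p)\mu_0+p\mu_1 .
\]
The log-moment is $\alpha(\lambda)=\log\max\{E_1,E_2\}$, where
\[
E_1=\mathbb{E}_{z\sim\mu_0}\!\left[(\mu_0(z)/\mu(z))^{\lambda}\right], \qquad E_2=\mathbb{E}_{z\sim\mu}\!\left[(\mu(z)/\mu_0(z))^{\lambda}\right].
\]
Both orderings of adjacent datasets must be covered.

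\textbf{Binomial expansion.} Write $\mathbb{E}_{\nu}[(\nu/\mu)^{\lambda}]=\mathbb{E}_{\mu}[(1+(\nu-\mu)/\mu)^{\lambda+1}]$ and expand binomially.
\begin{itemize}
\item The $t=0$ term equals $1$.
\item The $t=1$ term vanishes because $\int(\nu-\mu)=0$.
\item The $t=2$ term is $\binom{\lambda+1}{2}\,\mathbb{E}_{\mu}[((\nu-\mu)/\mu)^2]$. Since $\nu-\mu=\pm p(\mu_0-\mu_1)$ and $\mu\ge(1-p)\mu_0$, it is at most
\[
\frac{p^2}{1-p}\,\mathbb{E}_{\mu_0}\!\left[\left(\frac{\mu_0-\mu_1}{\mu_0}\right)^{2}\right]=\frac{p^2}{1-p}\left(e^{1/\sigma^2}-1\right).
\]
This is a Gaussian MGF computation. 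For large $\sigma$ it gives $\frac{p^2\lambda(\lambda+1)}{2(1-p)\sigma^2}$ up to lower order, which sits within the stated constant.
\item For $t\ge3$, bound $|\nu-\mu|/\mu$ by $p\,|\mu_0-\mu_1|/((1-p)\mu_0)$ or its analogue. Then use $\mathbb{E}_{\mu_0}\!\left[|1-e^{(2z-1)/(2\sigma^2)}|^{t}\right]$ and Gaussian moment bounds to get terms of order $p^t\lambda^{t}/\sigma^{t}$. Summing geometrically, with $p$ small (Assumption 2) and $\lambda\le\sigma^2\log(1/(p\sigma))$, yields $O(p^3\lambda^3/\sigma^3)$.
\end{itemize}

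Finally, $\log(1+x)\le x$ gives
\[
\alpha_{\mathcal{Q}_i}(\lambda)\le \frac{p^2\lambda(\lambda+1)}{(1-p)\sigma^2}+O\!\left(\frac{p^3}{\sigma^3}\right)
\]
for every step $i\in\{1,\dots,\tau\}$, since each step is the same sampled Gaussian mechanism.

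The \textbf{main obstacle} is the $t\ge3$ tail and the case $E_1$, where the denominator is $\mu$ rather than $\mu_0$. Here I would follow Abadi et al.\ exactly: bound $\mu\ge(1-p)\mu_0$ in one case and $\mu\ge\mu_0-p|\mu_0-\mu_1|$ style estimates in the other, then control the Gaussian moments. These steps require $\sigma\ge1$ and $p<1/(16\sigma)$, which I would state as implicit in ``sufficiently small $p$.''
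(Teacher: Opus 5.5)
Your proposal is correct and follows essentially the paper's route. The paper gives no proof of its own and simply imports this bound from Abadi et al.\ (2016, Lemma~3); you faithfully reconstruct that binomial-expansion argument for the sampled Gaussian mechanism, and you correctly restore the hypotheses ($\sigma\ge 1$, $p<1/(16\sigma)$, $\lambda\le\sigma^2\ln(1/(p\sigma))$) that the paper's statement omits. Two small caveats. First, your $t\ge 3$ analysis yields an error term $O(p^3\lambda^3/\sigma^3)$, so the $O(p^3/\sigma^3)$ in your last line is justified only for fixed $\lambda$. Second, the two-point mixture $\mu=(1-p)\mu_0+p\mu_1$ requires independent (Poisson) inclusion of each record, as in Abadi et al., rather than the sampling with replacement described in Assumption~2.
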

\begin{lemma}
    \textbf{Composability:} Suppose that a mechanism \(\mathcal{Q}\)  consists of sequence of adaptive mechanisms \(\mathcal{Q}_1,\mathcal{Q}_2,...,\mathcal{Q}_{k}\) where \(\mathcal{Q}_i=\underset{j=1}{\overset{i-1}{\prod}}\mathbb{R}_j\times D\rightarrow\mathbb{R}_i\). Then for any \(\lambda\).
\begin{equation}
    \alpha_{\mathcal{Q}}(\lambda)\leq \underset{i=1}{\overset{k}{\sum}}\alpha_{\mathcal{Q}_i}(\lambda).
    \label{composability}
\end{equation}

\end{lemma}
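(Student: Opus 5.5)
The statement to prove is Lemma 2 (composability). I would prove it directly from the definition of the privacy-loss log-moment, as in \cite{abadi2016deep}. Fix adjacent datasets $\mathcal{D},\mathcal{D}'$. For a mechanism $\mathcal{Q}_i$ with auxiliary input $\mathrm{aux}$ (here, the previous outputs $o_{1:i-1}$), define the privacy loss at output $o$ as
\[
c_i(o;\mathrm{aux},\mathcal{D},\mathcal{D}')=\log\frac{\Pr[\mathcal{Q}_i(\mathrm{aux},\mathcal{D})=o]}{\Pr[\mathcal{Q}_i(\mathrm{aux},\mathcal{D}')=o]}.
\]
Define its log-moment as
\[
\alpha_{\mathcal{Q}_i}(\lambda)=\max_{\mathrm{aux},\mathcal{D},\mathcal{D}'}\log\mathbb{E}_{o\sim\mathcal{Q}_i(\mathrm{aux},\mathcal{D})}\bigl[\exp(\lambda\, c_i(o;\mathrm{aux},\mathcal{D},\mathcal{D}'))\bigr].
\]
The composite mechanism $\mathcal{Q}$ outputs the tuple $o_{1:k}=(o_1,\dots,o_k)$, and $\alpha_{\mathcal{Q}}(\lambda)$ is defined in the same way. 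The point to keep in view is that the maximum over $\mathrm{aux}$ makes each per-step bound uniform in whatever the previous outputs happened to be.

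First, I would factor the joint output density by the chain rule. Since each $\mathcal{Q}_i$ receives $o_{1:i-1}$ and the data, we have $\Pr[\mathcal{Q}(\mathcal{D})=o_{1:k}]=\prod_{i=1}^k\Pr[\mathcal{Q}_i(o_{1:i-1},\mathcal{D})=o_i]$, and likewise for $\mathcal{D}'$. Taking logarithms of the ratio shows that the composite privacy loss is additive:
\[
c(o_{1:k};\mathcal{D},\mathcal{D}')=\sum_{i=1}^k c_i(o_i;o_{1:i-1},\mathcal{D},\mathcal{D}').
\]

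Second, I would bound the moment generating function by iterated conditioning. Write
\[
\mathbb{E}_{o_{1:k}}\Bigl[\prod_{i=1}^k e^{\lambda c_i}\Bigr]=\mathbb{E}_{o_{1:k-1}}\Bigl[\prod_{i=1}^{k-1}e^{\lambda c_i}\;\mathbb{E}_{o_k\mid o_{1:k-1}}\bigl[e^{\lambda c_k}\bigr]\Bigr].
\]
For every fixed $o_{1:k-1}$, the inner conditional expectation is at most $\exp(\alpha_{\mathcal{Q}_k}(\lambda))$, because $\alpha_{\mathcal{Q}_k}$ is a maximum over all auxiliary inputs, including $o_{1:k-1}$. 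Pulling this constant out and repeating the argument for $k-1,\dots,1$ gives
\[
\mathbb{E}\bigl[e^{\lambda c}\bigr]\le\prod_{i=1}^k e^{\alpha_{\mathcal{Q}_i}(\lambda)}.
\]
Taking logarithms and then the maximum over $\mathcal{D},\mathcal{D}'$ yields \eqref{composability}. Here the right-hand side does not depend on $\mathcal{D},\mathcal{D}'$, since each $\alpha_{\mathcal{Q}_i}$ is already a supremum over them.

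I expect the main obstacle to be rigour in the conditioning step rather than the algebra. The privacy loss must be defined through densities, that is, Radon–Nikodym derivatives of the continuous Gaussian outputs with respect to a common dominating measure. We also need the conditional law of $o_i$ given $o_{1:i-1}$ under $\mathcal{D}$ to be exactly $\mathcal{Q}_i(o_{1:i-1},\mathcal{D})$, which is what the adaptive-composition structure $\mathcal{Q}_i:\prod_{j<i}\mathbb{R}_j\times D\to\mathbb{R}_i$ encodes. I would handle this by writing the expectations as iterated integrals via Fubini–Tonelli, which applies because all integrands are nonnegative.

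A further point to verify is that the per-step bound is taken as a supremum over auxiliary inputs for a fixed pair $(\mathcal{D},\mathcal{D}')$. Maximising separately at each step can only enlarge the right-hand side, so the inequality direction is preserved.
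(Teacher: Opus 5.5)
Your proposal is correct and takes essentially the same route as the source. The paper does not prove this lemma; it imports it from \cite{abadi2016deep}, whose argument is exactly yours: a chain-rule decomposition of the composite privacy loss into $\sum_i c_i(o_i;o_{1:i-1},\mathcal{D},\mathcal{D}')$, followed by bounding each step's moment by $\exp(\alpha_{\mathcal{Q}_i}(\lambda))$. Your iterated-conditioning step, which relies on the maximum over auxiliary inputs (the paper's own definition of $\alpha_{\mathcal{Q}}(\lambda)$ in the proof of Theorem 1 omits it), is a more careful rendering of the factorisation step in \cite{abadi2016deep}.
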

\begin{lemma}
    \textit{\textbf{Tail bound:}} For any \(\epsilon>0\), the probability \(\delta\) that the privacy loss exceeds \(\epsilon\) is bounded by: 
\begin{equation}
\delta = \underset{\lambda}{min}\exp(\alpha_{\mathcal{Q}^{\tau}}(\lambda)-\lambda \epsilon).
\label{tail bound}
\end{equation}
\end{lemma}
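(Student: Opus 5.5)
The plan is to prove the tail bound by applying Markov's inequality to the exponentiated privacy loss, and then to turn the resulting tail probability into the $(\epsilon,\delta)$ guarantee of Definition 1. We read the lemma in the moments-accountant sense of \cite{abadi2016deep}: taking $\delta$ equal to the right-hand side of \eqref{tail bound}, the composed mechanism $\mathcal{Q}^{\tau}$ is $(\epsilon,\delta)$-differentially private.

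First we fix notation. For adjacent datasets $\mathcal{D},\mathcal{D}'$ and an output $o$, the privacy loss is
\[
c(o)=\log\frac{\Pr[\mathcal{Q}^{\tau}(\mathcal{D})=o]}{\Pr[\mathcal{Q}^{\tau}(\mathcal{D}')=o]}.
\]
The log-moment is
\[
\alpha_{\mathcal{Q}^{\tau}}(\lambda)=\max_{\mathcal{D},\mathcal{D}'}\log \mathbb{E}_{o\sim \mathcal{Q}^{\tau}(\mathcal{D})}\bigl[\exp(\lambda c(o))\bigr],
\]
so it is a worst case over adjacent pairs. This is the quantity that Lemma 1 bounds per step and Lemma 2 (Composability, \eqref{composability}) sums over the $\tau$ steps.

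Next, fix any $\lambda>0$. Because $x\mapsto e^{\lambda x}$ is increasing, the event $\{c(o)\ge\epsilon\}$ is the same as $\{e^{\lambda c(o)}\ge e^{\lambda\epsilon}\}$. Markov's inequality then gives
\[
\Pr_{o\sim\mathcal{Q}^{\tau}(\mathcal{D})}[c(o)\ge\epsilon]\le \frac{\mathbb{E}[e^{\lambda c(o)}]}{e^{\lambda\epsilon}}\le \exp\bigl(\alpha_{\mathcal{Q}^{\tau}}(\lambda)-\lambda\epsilon\bigr).
\]
This holds for every $\lambda>0$, so we may take the infimum over $\lambda$. We therefore get $\Pr[c(o)\ge\epsilon]\le\delta$ with $\delta$ as in \eqref{tail bound}, and this bound is uniform over adjacent pairs.

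Then we convert the tail bound into Definition 1. Let $B=\{o: c(o)\ge\epsilon\}$, the set of bad outputs. For any measurable $S$ we split
\[
\Pr[\mathcal{Q}^{\tau}(\mathcal{D})\in S]\le \Pr[\mathcal{Q}^{\tau}(\mathcal{D})\in B]+\Pr[\mathcal{Q}^{\tau}(\mathcal{D})\in S\cap B^{c}].
\]
The first term is at most $\delta$ by the previous step. On $B^{c}$ the density ratio is below $e^{\epsilon}$ pointwise, so the second term is at most $e^{\epsilon}\Pr[\mathcal{Q}^{\tau}(\mathcal{D}')\in S\cap B^c]\le e^{\epsilon}\Pr[\mathcal{Q}^{\tau}(\mathcal{D}')\in S]$. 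Combining the two terms gives exactly the inequality in Definition 1.

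The main obstacle is not the Markov step but the bookkeeping. The Markov bound must hold uniformly over adjacent pairs, which is why $\alpha$ is defined as a maximum. The mechanisms are adaptive, which is handled by Lemma 2. The minimization over $\lambda$ is attained, or approached as an infimum, over $\lambda>0$; if only an infimum is available, we take $\delta$ to be the infimum and the argument is unchanged. For continuous outputs the same argument runs with densities in place of point probabilities.
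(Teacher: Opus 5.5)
Your argument is correct and matches the proof the paper relies on: the paper does not prove this lemma itself but imports it from \cite{abadi2016deep}, whose proof is exactly your Markov-inequality bound on $\exp(\lambda c(o))$ followed by splitting $S$ along the bad set $B=\{o: c(o)\ge\epsilon\}$. Your two clarifications are what make the uniform $(\epsilon,\delta)$ guarantee go through: restricting to $\lambda>0$, and defining $\alpha_{\mathcal{Q}^{\tau}}(\lambda)$ as a maximum over adjacent pairs, which the paper's own definition in the proof of Theorem 1 omits.
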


\begin{theorem}
A mechanism \(\mathcal{Q}\)  executes the client level models over  \(\tau\)  training iterations. The subsequent embedding outputs are bounded by a constant C and we add Gaussian noise sampled from N(0, \(\sigma^2d^2\)) to the embedding outputs at client level in each iteration. Thus, if there exist positive constants \(c_1\) and \(c_2\), then the mechanism \(\mathcal{Q}\) is ( \(\epsilon,\delta\)) differentially private for any \(\epsilon<c_1\sigma^2\tau\) with \(\delta>0\) if the standard deviation of noise is characterized by:

\begin{align}
    \sigma \geq c_2\frac{p\sqrt{\tau log(\frac{1}{\delta})}}{\epsilon},
    \label{eq:sigma}
\end{align}
where p is the probability by which batches of data are sampled for training.
\end{theorem}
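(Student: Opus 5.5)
The proof follows the moments accountant argument of \cite{abadi2016deep}, transplanted from per-example gradients to client-side embeddings. By Assumption 1 every embedding leaving a client has $L_2$ norm at most $C$, so replacing one patient changes the released batch quantity by at most a constant multiple of $C$. The Gaussian noise is calibrated to this sensitivity (the factor $d$ in $\mathcal{N}(0,\sigma^2 d^2)$ plays the role of the clipping scale). Hence each iteration $\mathcal{Q}_i$ is a subsampled Gaussian mechanism with normalized noise multiplier $\sigma$ and sampling probability $p$ (Assumption 2). The noisy gradients returned to the clients are post-processing of the noisy embeddings, so they add no privacy loss. The same holds for the server's internal computations, and we only need to account for the embedding releases over $\tau$ iterations.

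First, for $\sigma$ at least a constant (say $\sigma\ge 1$) and $\lambda \le \sigma^2\log(1/(p\sigma))$, the first lemma gives $\alpha_{\mathcal{Q}_i}(\lambda)\le \frac{p^2\lambda(\lambda+1)}{(1-p)\sigma^2}+O(p^3\lambda^3/\sigma^3)$. Since $p$ is small, this is at most $c\,p^2\lambda^2/\sigma^2$ for some absolute constant $c$ when $\lambda\ge1$. Second, by composability \eqref{composability}, summing over the $\tau$ adaptive rounds gives $\alpha_{\mathcal{Q}}(\lambda)\le c\,\tau p^2\lambda^2/\sigma^2$. Third, the tail bound \eqref{tail bound} shows that $\mathcal{Q}$ is $(\epsilon,\delta)$-DP as soon as some admissible $\lambda$ satisfies
\[
c\,\frac{\tau p^2\lambda^2}{\sigma^2}\le \frac{\lambda\epsilon}{2}
\quad\text{and}\quad
\exp\!\left(-\frac{\lambda\epsilon}{2}\right)\le\delta .
\]

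Next we choose $\lambda$. Take $\lambda = \epsilon\sigma^2/(2c\,\tau p^2)$, which makes the first inequality an equality. The second inequality then becomes $\epsilon^2\sigma^2/(4c\,\tau p^2)\ge \log(1/\delta)$, i.e.
\[
\sigma\ge 2\sqrt{c}\,\frac{p\sqrt{\tau\log(1/\delta)}}{\epsilon},
\]
which is \eqref{eq:sigma} with $c_2=2\sqrt{c}$.

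The main obstacle is verifying that this $\lambda$ is admissible for the first lemma, that is, $1\le\lambda\le\sigma^2\log(1/(p\sigma))$, and that the $O(p^3\lambda^3/\sigma^3)$ remainder stays dominated by the leading term. The upper condition reads $\epsilon/(2c\,\tau p^2)\le\log(1/(p\sigma))$. Handling it cleanly is where the hypothesis $\epsilon<c_1\sigma^2\tau$ must enter: it should bound $\lambda$ by a multiple of $\sigma^4/p^2$, together with the smallness of $p$ and of $\lambda p/\sigma$. I would first follow the constant-tracking in \cite{abadi2016deep}, absorbing the logarithmic factor into $c_1$ and using $p$ sufficiently small. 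If that is not tight enough, I would state the admissibility range explicitly as the regime in which the theorem holds. The lower condition $\lambda\ge1$ follows from \eqref{eq:sigma} itself, since $\lambda=\epsilon\sigma^2/(2c\tau p^2)\ge 2\log(1/\delta)/\epsilon$, which is at least $1$ whenever $\delta<e^{-\epsilon/2}$.
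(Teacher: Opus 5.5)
\textbf{Same route as the paper, one real gap.} Your derivation is the paper's: composability over the $\tau$ rounds, the first lemma with the remainder discarded, and the tail bound at $\lambda=\epsilon\sigma^2/(2c\tau p^2)$. Splitting the tail condition into $c\tau p^2\lambda^2/\sigma^2\le\lambda\epsilon/2$ and $e^{-\lambda\epsilon/2}\le\delta$ is the same computation as the paper's exact minimization of $\tau p^2\lambda^2/\sigma^2-\lambda\epsilon$, since at the minimizer the quadratic term equals $\lambda\epsilon/2$. Taking $c=1$ gives the paper's $c_2=2$. What you add is bookkeeping the paper omits:
\begin{itemize}
\item The paper replaces $\lambda(\lambda+1)/(1-p)$ by $\lambda^2$, which loses the inequality. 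Your $c$, slightly above $2$ for $\lambda\ge1$, is the honest constant.
\item The paper drops the remainder as ``negligible''.
\item The paper never checks that its $\lambda$ lies in the range where the moment bound holds.
\item The paper never uses the hypothesis $\epsilon<c_1\sigma^2\tau$.
\end{itemize}

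\textbf{The gap.} The admissibility step you flag is a gap in both proofs, and your plan to close it with the stated hypothesis cannot work. The bound $\epsilon<c_1\sigma^2\tau$ only gives $\lambda<c_1\sigma^4/(2cp^2)$, which for small $p$ is far above the ceiling $\sigma^2\log(1/(p\sigma))$. The hypothesis that works is the one in \cite{abadi2016deep}, $\epsilon<c_1p^2\tau$. It gives $\lambda<c_1\sigma^2/(2c)$, which is admissible once $p\sigma\le e^{-c_1/(2c)}$. Then $p\lambda/\sigma\le p\sigma\log(1/(p\sigma))$ is small, so the $O(p^3\lambda^3/\sigma^3)$ term is dominated by the leading term.

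\textbf{The statement as written fails.} The $\sigma^2$ in the statement looks like a mistranscription of $p^2$, and for small $p$ the claim is false as written. Take $\tau=1/p$ and set $\sigma$ at its lower bound, $\sigma=c_2\sqrt{p\log(1/\delta)}/\epsilon$. Then $\epsilon<c_1\sigma^2\tau$ reduces to $\epsilon^3<c_1c_2^2\log(1/\delta)$, which does not depend on $p$. Yet $\sigma\to0$ as $p\to0$. For a worst-case adjacent pair, the changed record lands in some batch with probability about $1-e^{-1}$, and its contribution is then seen essentially without noise.

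So your fallback is the correct repair. Concretely, replace $\sigma^2$ by $p^2$ and add the lemma's own conditions $\sigma\ge1$ and $p<1/(16\sigma)$.

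\textbf{A caveat shared with the paper.} Treating each round as a subsampled Gaussian mechanism with single-record sensitivity $O(C)$ is not justified for this protocol, for two reasons:
\begin{itemize}
\item Each client updates $\theta_k$ using Jacobians computed on its raw features. So $\theta_k^{(t)}$ is not post-processing of earlier releases, and one patient's record shifts every later embedding, not just its own.
\item The server knows the batch IDs, since it must align embeddings with labels. So amplification by the factor $p$ is not available against it.
\end{itemize}
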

\begin{proof}
    Let the feature extractor model of the ith client be represented as \(f_i(.)\), for training step \(t\in \{1,2,...,\tau\}\) the clients process a batch of data \(\mathcal{B}\) selected at random with the probability \(p\), and outputs:
\(o_t=f(\mathcal{B})+\epsilon_t, \)
where, \(f(\mathcal{B})\) is clipped and thus \(||f(\mathcal{B})||_{2}^{2}\leq C\), \(\epsilon_t\sim \mathcal{N}(0,\sigma^2C^2I).\)

Privacy loss is given by \(c(o)\), which is a random variable and is formulated as: 
\[c(o)=ln\left(\frac{P[\mathcal{Q}(\mathcal{D})=o]}{P[\mathcal{Q}(\mathcal{D'})=o]}\right).\]

We will analyze our privacy loss using moments accountant method introduced in \cite{abadi2016deep}, where instead of tracking \((\epsilon,\delta)\) directly, we use the log-moment generating function(\(\alpha (\lambda)\)) of the privacy loss random variable \(c(o)\), which is given as:
\[\alpha_{\mathcal{Q}}(\lambda)\overset{\Delta}{=}ln\mathbb{E}_{o\sim\mathcal{Q}(\mathcal{D})}[exp(\lambda\cdot c(o))].\]
Now, our defense mechanism \(\mathcal{Q}\) runs over \(\tau\) steps and in every step the random noise is sampled from Gaussian distribution independent from each other, so, following lemma 2 we get: 
\[\alpha_{\mathcal{Q}^{\tau}(\mathcal{D})}(\lambda)\leq \underset{i=1}{\overset{\tau}{\sum}}\alpha_{\mathcal{Q}_{i}}(\lambda)=\tau\cdot\alpha_{\mathcal{Q}_{i}}(\lambda).\]
Then from lemma 1, we get that the privacy loss's log-moment (\(\alpha_{\mathcal{Q}_{i}}(\lambda)\)) is bounded as:
\[\alpha_{\mathcal{Q}_{i}}(\lambda)\leq \frac{p^2(\lambda)(\lambda+1)}{(1-p)\sigma^2}+O\left(\frac{p^3}{\sigma^3}\right),\]
where, \(\sigma\) is the noise scale and \(p\) is the batch sampling probability. Now, we have assumed \(p\) to be significantly small, so we can ignore the order term as it becomes negligible. Thus we can write this asymptotically as:
\[\alpha_{\mathcal{Q}_{i}}(\lambda)=\frac{ p^2\lambda^2}{\sigma^2}.\]
Thus we compute the total privacy loss over \(\tau\)  steps as: 
\[\alpha_{\mathcal{Q}^{\tau}}(\lambda)\leq\frac{\tau p^2(\lambda^2)}{\sigma^2}.\]
Now, from lemma 3, we get that for any \(\epsilon>0\), the probability \(\delta\) that the privacy loss exceeds \(\epsilon\) is bounded by: 
\[\delta = \underset{\lambda}{min}\exp(\alpha_{\mathcal{Q}^{\tau}}(\lambda)-\lambda \epsilon).\]

\begin{equation}
\delta=\underset{\lambda}{min}\exp\left(\frac{\tau p^2\lambda^2}{\sigma^2}-\lambda \epsilon\right)
\label{privacy loss probab}
\end{equation}

So, to get the minimum \(\delta\), we minimize the element with respect to \(\lambda\) by differentiating with respect to \(\lambda\) and set it equal to 0:
\[\frac{d}{d\lambda}\left(\frac{\tau p^2\lambda^2}{\sigma^2}-\lambda \epsilon\right)=\frac{2\tau p^2\lambda}{\sigma^2}-\epsilon=0\]
\[\lambda_{optimum}=\frac{\epsilon\sigma^2}{2\tau p^2}\]
Now, substituting this value in \ref{privacy loss probab} we get:
\[\delta=\exp\left(\frac{\tau p^2\left(\frac{\epsilon^2\sigma^4}{4\tau^2p^4}\right)}{\sigma^2}- \frac{\epsilon^2\sigma^2}{2\tau p^2}\right)\]
\[\delta=\exp\left(-\frac{\epsilon^2\sigma^2}{4\tau p^2}\right)\]
Now, we require the probability that privacy loss exceeds \(\epsilon\) to be at max \(\delta\). So, the requirement for \((\epsilon,\delta)\) differential privacy becomes:
\[\delta\geq\exp\left(-\frac{\epsilon^2\sigma^2}{4\tau p^2}\right)\]
\[ ln(\delta)\geq-\frac{\epsilon^2\sigma^2}{4\tau p^2}\]
\[ ln\left(\frac{1}{\delta}\right)\leq\frac{\epsilon^2\sigma^2}{4\tau p^2}\]
\[\sigma\geq\frac{2p\sqrt{\tau ln\left(\frac{1}{\delta}\right)}}{\epsilon}.\]
Hence, proved.

\end{proof}

\section{Convergence Analysis}\label{convergence analysis}


In this section, we provide the convergence analysis of BVFLMSP algorithm \eqref{alg:dp_attn_bayesian_vfl} in terms of optimality gap with respect to the optimal value. Optimality gap measures how close the obtained solution is to the optimal solution. To establish convergence of BVFLMSP algorithm, we show that the expected optimality gap at each epoch is bounded and decreases as the number of epoch increases.
To establish this result, we need to consider the following assumptions:

\textbf{Assumption 4 ($\alpha$-Strong Convexity \cite{shi2023vertical, gai2025differentially}):}
The function $J(\cdot): \mathbb{R}^d \rightarrow \mathbb{R}$ is
$\alpha$-strongly convex, where $\alpha$ is constant and $\alpha > 0$,  i.e., for all $x_1, x_2 \in \mathbb{R}^d$,
\begin{equation}
J(x_2) \ge J(x_1) + \nabla J(x_1)^\top (x_2 - x_1)
+ \frac{\alpha}{2} \|x_2 - x_1\|^2 .
\label{assump:strong_convex}
\end{equation}

\textbf{Assumption 5 ($\beta$-Smoothness \cite{shi2023vertical, gai2025differentially}):}
The function $J(\cdot)$ is $\beta$-smooth, where $\beta$ is constant and $\beta > 0$, i.e., for all
$x_1, x_2 \in \mathbb{R}^d$,
\begin{equation}
J(x_2) \le J(x_1) + \nabla J(x_1)^\top (x_2 - x_1)
+ \frac{\beta}{2} \|x_2 - x_1\|^2 .
\label{assump:smoothness}
\end{equation}


\begin{theorem}[Convergence of BVFLMSP]
\label{thm:main}
Under Assumptions 4 and 5, with the learning rate $\eta = 1/\beta$, the expected optimality gap of BVFLMSP is upper bounded by equation \eqref{eq:convergence_inequality}. $L_E$ is constant. $\alpha$ and $\beta$ are constants and $\alpha > 0$, $\beta > 0$. L is the total number of epochs. $J(\cdot)$ is the loss function. N is the total number of samples.

\begin{align}
\mathbb{E}\!\left[ J(\Phi^{L}) - J(\Phi^*) \right]
\le\;&
\left(1 - \frac{\alpha}{\beta}\right)^{L}
\mathbb{E}\!\left[ J(\Phi^{(0)}) - J(\Phi^*) \right] \nonumber \\
&+
\frac{256*\sigma^2*M*L_E*\beta}{N*\alpha}
*\left[1 - \left(1-\frac{\alpha}{\beta}\right)^L\right].
\label{eq:convergence_inequality}
\end{align}


\end{theorem}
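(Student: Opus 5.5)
The plan is the standard noisy-gradient-descent argument for strongly convex, smooth objectives, run on the joint parameter $\Phi=\{\theta_1,\dots,\theta_M,\theta_s\}$.

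\textbf{Step 1: the update as perturbed gradient descent.} Since each client sends $E_i^{'(k)}=E_i^{(k)}+\xi_i^{(k)}$, the server evaluates all gradients at noisy embeddings. I will write one epoch's update as
\[
\Phi^{t+1}=\Phi^{t}-\eta\, g^t,\qquad g^t=\nabla J(\Phi^t)+e^t,
\]
where $e^t$ is the gradient error induced by the noise, as in Equation \eqref{eq:noisy_grad}. I will assume that, to first order, $\mathbb{E}[e^t\mid\Phi^t]=0$; this holds because the noise has mean zero and the perturbation is linearised.

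\textbf{Step 2: one-step descent.} Apply $\beta$-smoothness (Assumption 5) with $x_1=\Phi^t$ and $x_2=\Phi^{t+1}$, set $\eta=1/\beta$, and take conditional expectations. This gives
\[
\mathbb{E}[J(\Phi^{t+1})]\le J(\Phi^t)-\frac{1}{2\beta}\|\nabla J(\Phi^t)\|^2+\frac{1}{2\beta}\mathbb{E}\|e^t\|^2 .
\]
Next, $\alpha$-strong convexity (Assumption 4), minimised over $x_2$, gives the Polyak--Łojasiewicz inequality $\|\nabla J(\Phi)\|^2\ge 2\alpha\,(J(\Phi)-J(\Phi^*))$. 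Subtracting $J(\Phi^*)$ from both sides then yields the contraction
\[
\Delta_{t+1}\le\left(1-\frac{\alpha}{\beta}\right)\Delta_t+\frac{1}{2\beta}\mathbb{E}\|e^t\|^2,\qquad \Delta_t=\mathbb{E}[J(\Phi^t)-J(\Phi^*)].
\]

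\textbf{Step 3: bounding the noise term (the main obstacle).} I expect the hardest step to be bounding $\mathbb{E}\|e^t\|^2$ by a quantity of order $\sigma^2 M L_E/N$. My plan is as follows.

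First, assume the per-sample loss is $L_E$-smooth (or Lipschitz-gradient) in the embeddings, uniformly in the parameters. Then the gradient evaluated at $E'_i$ differs from the gradient at $E_i$ by at most a constant multiple of $L_E\|\xi_i\|$, and this carries through the chain rule to both $\theta_s$ and each $\theta_k$, using the clipping of Assumption 1 to bound the Jacobians.

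Second, sum over the $M$ clients; their noises are independent, so the variances add and contribute the factor $M$.

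Third, average over the $N$ samples; the noises are independent across patients, so the variance of the averaged error shrinks by $1/N$.

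The numerical constant 256 should come from repeated $(a+b)^2\le 2a^2+2b^2$ splits, for example between server and client gradients and between the survival, KL and auxiliary loss terms, together with the clipping scale. I will try to organise the bound so that
\[
\frac{1}{2\beta}\mathbb{E}\|e^t\|^2\le \frac{256\,\sigma^2 M L_E}{N}.
\]
If $L_E$ is instead meant as a Lipschitz constant, I will absorb the difference into the same constant.

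\textbf{Step 4: unrolling.} Iterate the recursion over $L$ epochs and sum the geometric series:
\[
\sum_{t=0}^{L-1}\left(1-\frac{\alpha}{\beta}\right)^t=\frac{\beta}{\alpha}\left[1-\left(1-\frac{\alpha}{\beta}\right)^L\right].
\]
This gives exactly the two terms of \eqref{eq:convergence_inequality}. Since $\alpha\le\beta$, the factor $1-\alpha/\beta$ lies in $[0,1)$, so the first term decays and the second tends to the noise floor $\frac{256\,\sigma^2 M L_E\,\beta}{N\alpha}$.
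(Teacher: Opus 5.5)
Your Steps 1, 2 and 4 match the paper's argument: $\beta$-smoothness with $\eta=1/\beta$, the PL inequality from strong convexity, the contraction with factor $1-\alpha/\beta$, and the geometric sum. Your per-step target $\frac{1}{2\beta}\mathbb{E}\|e^t\|^2\le 256\sigma^2ML_E/N$ is also exactly the quantity the paper bounds. You do not need $\mathbb{E}[e^t\mid\Phi^t]=0$: with $\eta=1/\beta$ the cross terms $-\frac{1}{\beta}\nabla J^\top e^t$ and $+\frac{1}{\beta}\nabla J^\top e^t$ cancel identically, so the descent inequality holds pathwise, as in the paper.

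The gap is Step 3, the only problem-specific step, and your plan for it does not reach that target. The paper linearises the loss in the noise, so the client-block error is $a_r^k=\frac1N\sum_i\xi_i^{(k)}U_i$ with $U_i=\frac{\partial^2F}{\partial(E_i^{(k)})^2}\frac{\partial E_i^{(k)}}{\partial\theta_k}$. This error is exactly mean zero and independent across $i$, so the $i\neq j$ cross terms vanish and the $1/N$ appears. A Lipschitz-gradient bound on its own would not give this: it controls the size of each per-sample error but not its mean. The paper then uses $\mathbb{E}\|\xi_i^{(k)}\|^2=d^{Embedding}\sigma^2C^2=512\sigma^2$ (512-dimensional embeddings, $C=1$). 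The factor $M$ comes from summing the $M$ client blocks of $\|a^r\|^2$. So $256=\tfrac12\cdot512$ is the embedding dimension times the $\frac{1}{2\beta}$ from the descent step. It does not come from $(a+b)^2\le2a^2+2b^2$ splits; your plan never accounts for the dimension, and such splits would only inflate the constant.

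Second, your target requires $\mathbb{E}\|e^t\|^2\le 512\,\beta\sigma^2ML_E/N$, that is, a factor $\beta$ to cancel the $1/\beta$, and nothing in your Step 3 supplies it. In the paper, $\beta$ enters by bounding the embedding Hessian, $\|\partial^2F/\partial(E_i^{(k)})^2\|\le\beta$, via Assumption 5. There $L_E$ bounds the Jacobian $\|\partial E_i^{(k)}/\partial\theta_k\|$ (justified by $L_2$ shrinkage of the weights); it is not a smoothness constant in the embeddings.

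Your reinterpretation does not close this. Clipping (Assumption 1) bounds $\|E_i^{(k)}\|\le C$, not $\|\partial E_i^{(k)}/\partial\theta_k\|$. The mismatch between what you would obtain and the target involves problem parameters ($\beta$, a Jacobian bound, $L_E$ versus $L_E^2$), and these cannot be absorbed into a numerical constant. Adding the server block $\theta_s$ also creates error terms the paper does not have, since its noisy update stacks only the client blocks.

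Finally, the paper itself reaches the stated form only through the step $\sum_i\|U_i(e)\|^2\le N\beta L_E$. From $\|U_i(e)\|\le\beta L_E$ one actually gets $N\beta^2L_E^2$, giving a noise floor $\frac{256\sigma^2ML_E^2\beta^2}{N\alpha}$. Reproducing the stated constant exactly therefore depends on that step, or on redefining $L_E$.
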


\begin{proof}

See Appendix~\ref{app:proof} for the proof.

\end{proof}




\begin{algorithm}[h]
\caption{BVFLMSP}
\label{alg:dp_attn_bayesian_vfl}
\begin{algorithmic}[1]
\State \textbf{Input:} Dataset $\mathcal{D} = \{X^{(k)}\}_{k=1}^M$ (M: Number of clients), Labels $Y = (T, \mathcal{I})$, T: survival time, \(\mathcal{I}\): event status indicator.
\State \textbf{Input:} Hyperparameters: Learning rate $\eta$, KL weight $\beta$, Clipping Norm $C$, Noise Multiplier $\sigma$.
\State \textbf{Initialize:} Client parameters $\theta_k$, Server parameters $\theta_s$ (including Attention weights).

\For{epoch $= 1, \dots, L$}
    \For{each mini-batch $\mathcal{B} \in \mathcal{D}$}
        \State \textbf{// Client Side (Forward with DP) }
        \For{each client $k = 1, \dots, M$}
            \State Sample weights $W^{(k)} \sim q_{\theta_k}$, ($q_{\theta_k}$ : Posterior distribution of parameter $\theta_k$).
            \State Compute raw embeddings: $\tilde{E}^{(k)}_b = f_k(x^{(k)}_b; W^{(k)})$.
            
            \State \textit{DP Step 1: L2 Norm Clipping}
            \State Calculate norms: $|| \tilde{E}^{(k)}_b ||_2$.
            \State $E^{(k)}_b = \tilde{E}^{(k)}_b \cdot \min\left(1, \frac{C}{|| \tilde{E}^{(k)}_b ||_{2} + \epsilon}\right)$.
            \State Here $\epsilon=0.000001$ added to ensure non-zero denominator
            \State \textit{DP Step 2: Noise Injection}
            \State Sample noise $\xi \sim \mathcal{N}(0, \sigma^2 C^2 I)$.
            \State Perturb embeddings: $E'^{(k)}_b = E^{(k)}_b + \xi^{(k)}$.
            
            \State Compute local KL: $\mathcal{L}_{\text{KL}}^{(k)} = \text{KL}[q_{\theta_k} || p]$, p: prior distribution.
            \State Send perturbed embedding $E'^{(k)}_b$ to Server.
        \EndFor
        
        \State \textbf{// Server Side (Forward \& Loss) }
        \State Receive $\{E^{(1)}_b, \dots, E^{(M)}_b\}$.
        \State Sample server weights $W^{(\text{top})} \sim q_{\theta_s}$. Here $q_{\theta_s}$: posterior distribution of $W^{(\text{top})}$.
        \State \textbf{Fusion Mechanism (Attention):}
        \State Stack embeddings: $\mathbf{E}_b = \text{Stack}(E^{(1)}_b, \dots, E^{(M)}_b)$
        \State Fuse the embeddings using attention mechanism:
        \State $z_b = \text{Attention}(\mathbf{E}_b; \phi_{\text{attn}})$
        
        \State Predict hazard vector $\hat{h}$: $\hat{h}_b =\sigma(g(z_b; W^{(\text{top})}))$.
        
        \State \textbf{// Compute Discrete-Time Log-Likelihood:}
        \State Compute survival loss using the equation as in \ref{lik_eq}
        \State Compute Total Loss:
        \State Compute total loss using the equation as in \ref{total_loss}
        \State \textbf{// Backward Pass }
        \State Update Server $\theta_s \leftarrow \theta_s - \eta \nabla_{\theta_s} \mathcal{L}$.
        \State Compute gradients w.r.t fused embedding $z_b$.
        \State Backpropagate through Attention layer to get $\nabla_k = \frac{\partial \mathcal{L}}{\partial E^{(k)}_b}$.
        \State Send split gradients $\nabla_k$ to Clients.
        \State Clients update $\theta_k$ using $\nabla_k$ and local KL gradients.
    \EndFor
\EndFor
\end{algorithmic}
\end{algorithm}

\section{Experiment with real life dataset}
In the following sections, we describe our experiment with a real life dataset in establishing a comparative study between a centralized model and a vertical federated learning model in a survival study setup.


\subsection{Dataset Description}
The data used in this study were obtained from the NCI Genomic Data Commons (GDC) portal \cite{jensen2017nci}. We utilize publicly available datasets generated by The Cancer Genome Atlas (TCGA) program, which provides a comprehensive collection of clinical, molecular, and imaging data for 11,315 patients across 33 cancer types \cite{zhang2019survey}. Each patient is associated with a unique identifier, and longitudinal clinical follow-up is available, recording either the time to death or the time to last clinical observation (right censoring).

In our vertical federated learning (VFL) setup, the server holds the label file containing, for each patient, the event indicator (1 if death is observed, 0 if censored), the observed survival time, and the corresponding patient ID. The feature space is vertically partitioned across three clients based on data modality: (i) clinical data (structured tabular features), (ii) miRNA expression profiles (omics data), and (iii) DNA methylation (DNAm) profiles (omics data). This modality-wise partitioning naturally induces a three-client VFL setting, where each client owns a distinct subset of patient features while sharing aligned patient identifiers with the server.

\subsubsection{Data Preprocessing}
We built the study cohort using open-access data from The Cancer Genome Atlas (TCGA) obtained via the NCI Genomic Data Commons (GDC) portal. The preprocessing steps for each data modality are summarized below

\begin{itemize}
    \item \textit{Labels file}: Clinical metadata were downloaded in \texttt{.tsv} format. For each patient, the event indicator was defined as 1 if death was observed and 0 otherwise. The observed survival time was taken as days to death for uncensored patients and days to last follow up for censored patients. The cohort was randomly partitioned into training, validation, and test sets with an 80:10:10 split. The final labels file contains patient IDs, event indicators, survival times, and data split identifiers.

    \item \textit{Clinical data}: From the clinical metadata, we selected 9 categorical features and 1 continuous feature based on data availability. Missing values in categorical variables were imputed using the mode, and missing values in the continuous variable were imputed using the median. After preprocessing, clinical features were available for 9,729 patients. Each patient’s clinical features were stored in a separate file indexed by patient ID.


    \item \textit{miRNA data}: miRNA expression profiles were downloaded from the GDC portal by selecting the miRNA-Seq experimental strategy and using the corresponding manifest file. All 1,881 miRNA features were retained. The miRNA dataset was aligned with the labels file by keeping only patients with matching identifiers in both datasets. The processed miRNA features were stored in individual patient directories indexed by patient IDs.

    
    \item \textit{DNAm data}: DNA methylation (DNAm) profiles were obtained from methylation array experiments via the GDC portal and downloaded in \texttt{.txt} format. Due to computational constraints, the data were downloaded and processed in batches using subsets of the manifest file. During preprocessing, features were filtered by retaining genes with the highest variance, and missing values were imputed using feature-wise medians. After preprocessing, DNAm data were available for 7,500 patients. The processed DNAm features were stored on a per-patient basis in directories indexed by patient IDs.

\end{itemize}

Our data loader uses the labels file as the reference cohort and loads all patients for whom clinical data are available. If a particular modality is missing for a patient, the corresponding input is replaced with a zero tensor for that modality.



\subsection{Experimental Setup}
\subsubsection{Model Architecture}
\paragraph{Centralized model: }
The centralized model consists of four main components: 
(i) modality specific sub networks, 
(ii) a fusion layer, 
(iii) a final fully connected prediction network, and 
(iv) the loss function. The modality specific sub networks are described below. We use particularly two specific MLPs as the sub models, namely: BayesianClinicalNet and BayesianFC, which we describe in the following paragraphs: 

\textit{\textbf{BayesianClinicalNet}}: This module is a Bayesian multilayer perceptron (MLP) that acts as a feature extractor for clinical data. It employs Bayesian embedding layers for categorical features and a deterministic 1D batch normalization layer for continuous features. The normalized continuous features and categorical embeddings are concatenated into a single feature vector. Dropout with probability \(0.5\) is applied to the embedding representations for regularization. The concatenated vector is then passed through a Bayesian linear layer with 256 hidden units followed by a ReLU activation, and subsequently through a Bayesian linear projection layer that outputs a 512-dimensional latent representation.

\paragraph{\textit{BayesianFC}:}
BayesianFC is a configurable Bayesian fully connected sub-network used as a feature extractor for high-dimensional omics modalities and as the final prediction head. Each hidden block consists of a Bayesian linear layer followed by ReLU activation, batch normalization, and dropout. The width of each hidden layer is selected automatically from a predefined set of candidate sizes \(\{128, 256, 512, 1024\}\), choosing the smallest size greater than or equal to the input dimensionality. A scaling factor \(s\) is applied to increase the hidden layer width for higher-capacity networks. This scaling improves representational capacity for complex, high-dimensional modalities.

The BayesianFC architecture is instantiated as follows: for miRNA data (input size \(1881\), 3 hidden layers, scaling factor \(s=2\)); for DNAm data (input size \(3774\), 5 hidden layers, scaling factor \(s=2\)); for mRNA data (input size \(1000\), 3 hidden layers); and for the final prediction head (input size \(512\), 4 hidden layers, output size \(512\)).

For all Bayesian linear layers, we use a spike and slab prior\cite{andersen2014bayesian} on both weights and biases, where the spike distribution is \(\mathcal{N}(0, 0.001^2)\), the slab distribution is \(\mathcal{N}(0, 0.3^2)\), and the mixing probability is \(\pi=0.5\). This prior encourages sparsity while allowing large magnitude weights for learning complex patterns.

The fusion module employs an attention based mechanism. Let the modality specific embeddings be stacked into a tensor of shape \((B, M, d)\), where \(B\) is the batch size, \(M\) is the number of modalities, and \(d=512\) is the embedding dimension. Each modality embedding is passed through a Bayesian linear layer followed by a \(\tanh\) activation to obtain modality specific scores. A softmax operation is applied across modalities to obtain normalized attention weights. The fused representation is computed as a weighted sum of modality embeddings. The fused 512-dimensional vector is passed to the BayesianFC prediction head and subsequently through a final Bayesian linear risk layer, which outputs a vector of size equal to the number of discrete time intervals (30 in our experiments).

\paragraph{VFL model: } Our VFL setup learns to predict interval wise survival probabilities for cancer patients based on vertical partitioning of data where the data of the same set of patients are divided among different clients based on their modality or data type.

\textit{\textbf{Client Side:}} The clients are organized by modality type, where each modality (clinical, miRNA, mRNA, DNAm) is handled by a separate client model to extract its feature representation. The client models are the same sub models used in the centralized setting. The client holding clinical data uses the BayesianClinicalNet, while the clients holding miRNA, mRNA, and DNAm data use the BayesianFC network. Each client extracts modality specific feature vectors and sends the perturbed embeddings to the server.


\textit{\textbf{Server Side:}} The server contains the fusion layer, which uses the same attention based fusion mechanism as the centralized model. This layer fuses the modality specific feature representations received from the clients into a single compact feature vector. The server also contains the final prediction head and the risk layer to produce the interval wise survival probabilities. The labels are stored at the server to compute the training loss. The prediction head and risk layer are the same as those used in the centralized model.

\subsubsection{Model Training}

\paragraph{\textit{Centralized model}:}  
In the centralized setting, the entire network comprising the modality specific sub networks, fusion layer, prediction head, and risk layer is initialized and trained jointly in an end to end manner. For each mini batch, the modality specific sub networks extract feature representations from their respective inputs, which are fused by the attention based fusion module and subsequently passed through the prediction head and risk layer to obtain survival risk predictions. The total loss described above is computed, and gradients are obtained via backpropagation. Model parameters are updated using the Adam optimizer with a fixed learning rate. Training is performed for a fixed number of epochs with early stopping based on validation loss.


\paragraph{\textit{VFL model}:}  
In the vertical federated learning (VFL) setting, the modality specific sub networks are hosted and trained locally at their respective client sites, while the fusion module, prediction head, and risk layer are hosted on the server. During each training iteration, each client computes modality-specific embeddings using its local sub network and transmits only these intermediate embeddings to the server. The server performs feature fusion and forward propagation through the prediction head and risk layer, computes the total loss using the ground-truth labels, and backpropagates gradients with respect to the received embeddings. These gradients are then sent back to the corresponding clients, which update the parameters of their local sub-networks using backpropagation. 

Throughout training, raw features remain at the client sites and are never shared with the server. We assume a semi honest and trusted aggregator threat model for the server. 

\subsubsection{Model Optimization}
For our model we have used the AdamW optimizer\cite{loshchilov2019decoupledweightdecayregularization} which is a modification of the Adam optimizer\cite{kingma2017adammethodstochasticoptimization} by introducing decoupling of weight decay. In AdamW, a fraction of the weights are directly subtracted during weight update instead of adding $L_2$ penalty for regularization. This helps in applying weight decay uniformly across all layers without the influence of gradients, thus providing better and consistent regularization.
\subsubsection{Embedding Output Perturbation}
As discussed earlier, our defense mechanism against privacy attacks in the split neural network based VFL framework is embedding output perturbation at the client side before transmission to the server. Specifically, each client adds Gaussian noise to its clipped embedding representations prior to sharing them with the server.

Following Theorem 1, we instantiate the privacy parameters as follows. The sampling probability is set to
\[
p = \frac{\text{batch size}}{\text{training data size}}.
\]
The constant is chosen as \(c_2 = 1\), the number of iterations is given by
\[
\tau = (\text{number of epochs}) \times (\text{number of batches per epoch}),
\]
the target failure probability is set to \(\delta = 10^{-5}\), and the \(\ell_2\)-norm of the embedding outputs is clipped to \(1\), i.e.,
\[
\|f(\mathcal{B})\|_2 \leq 1.
\]
Accordingly, the noise standard deviation \(\sigma\) is selected based on the privacy budget \(\epsilon\) using the bound derived in Lemma~2.

\section{Results and Discussion}

\begin{table}[h]
    \centering
    \caption{\textsc{Comparison with baseline models on test data}}
    \label{tab:eight_cols_four_rows}
    
    \begin{tabular}{lccccccc}
        \toprule
        Model& Metrics& Clinical& miRNA& DNAm& \makecell[c]{Clinical+\\miRNA}& \makecell[c]{Clinical+\\DNAm}& \makecell[c]{Clinical+\\miRNA+\\DNAm} \\
        \midrule

        CPH& C-Index& 0.671& 0.651& 0.649&- & -& -\\
              & ctd& 0.671& 0.651& 0.644& -& -& -\\
              & IBS& 0.175& 0.195& 0.187& -& -& -\\
              & INBLL& 0.533& 0.571& 0.563& -& -& -\\
        \midrule

        DeepSurv& C-Index& 0.704& 0.649& 0.684& -&- &- \\
              & ctd& 0.704& 0.648& 0.682& -& -&- \\
              & IBS& 0.166& 0.197& 0.184&- & -& -\\
              & INBLL& 0.51& 0.583& 0.577& -&- &- \\
        \midrule

        DeepHit& C-Index& 0.672& 0.659& 0.703& -&- &- \\
              & ctd& 0.69& 0.666& 0.706&- &- &- \\
              & IBS& 0.194& 0.236& 0.243&- &- &- \\
              & INBLL& 0.573& 0.702& 0.686& -& -& -\\
        \midrule

        nnetSurvival& C-Index& 0.691& 0.613& 0.634&- &- &- \\
              & ctd& 0.701& 0.613& 0.631&- &- &- \\
              & IBS& 0.17& 0.246& 0.234& -& -&- \\
              & INBLL& 0.553& 1.403& 0.768& -&- &- \\
        \midrule

        Multisurv& C-Index& 0.696& 0.702& 0.701& 0.732& 0.737& 0.735\\
              & ctd& 0.701& 0.706& 0.703& 0.736& 0.741& 0.74\\
              & IBS& 0.163& 0.214& 0.184& 0.172& 0.188& 0.194\\
              & INBLL& 0.487& 0.808& 0.743& 0.535& 0.577& 0.648\\
        \midrule

        BayesianMultisurv& C-Index& 0.707& 0.703& 0.707& 0.734& 0.742& 0.752\\
              & ctd& 0.704& 0.705& 0.711& 0.735& 0.738& 0.755\\
              & IBS& 0.167& 0.205& 0.274& 0.176& 0.178& 0.177\\
              & INBLL& 0.472& 0.688& 0.654& 0.659& 0.604& 0.593\\
        \bottomrule
        
    \end{tabular}
        \vspace{0.2em}
    \begin{flushleft}
    \footnotesize
    ``-" indicates that the corresponding baseline does not support multimodal inputs and hence cannot be evaluated in that setting.
    \end{flushleft}
\end{table}


\begin{table}[h]
\centering
\caption{Comparison of Federated and Centralized Learning under different privacy budgets for various modalities combinations}
\label{tab:long_format_epsilon}
\renewcommand{\arraystretch}{1.2}
\begin{tabular}{c|c|cc|cc}
\hline
Modalities Combination & $\epsilon$
& \multicolumn{2}{c|}{Federated Learning (FL)}
& \multicolumn{2}{c}{Centralized Learning (CL)} \\
\cline{3-6}
& & C-Index &  & C-Index &  \\
\hline
\multirow{5}{*}{Clinical + miRNA}
& No DP ($\infty$) & \multicolumn{1}{c}{0.732}  &  & \multicolumn{1}{c}{0.743}  &  \\
& 0.5   & \multicolumn{1}{c}{0.547} &  & \multicolumn{1}{c}{0.724}  &  \\
& 1     & \multicolumn{1}{c}{0.559} &  & \multicolumn{1}{c}{0.727}  &  \\
& 1.5   & \multicolumn{1}{c}{0.565} &  & \multicolumn{1}{c}{0.731}  &  \\
& 10    & \multicolumn{1}{c}{0.712}  &  & \multicolumn{1}{c}{0.734}  &  \\
\hline
\multirow{5}{*}{Clinical + miRNA + DNAm}
& No DP ($\infty$) & \multicolumn{1}{c}{0.715} &  & \multicolumn{1}{c}{0.752}  &  \\
& 0.5   & \multicolumn{1}{c}{0.539} &  & \multicolumn{1}{c}{0.737}  &  \\
& 1     & \multicolumn{1}{c}{0.537} &  & \multicolumn{1}{c}{0.732}  &  \\
& 1.5   & \multicolumn{1}{c}{0.604} &  & \multicolumn{1}{c}{0.741}  &  \\
& 10    & \multicolumn{1}{c}{0.662} &  & \multicolumn{1}{c}{0.744}  &  \\
\hline
\end{tabular}
\end{table}

In Table \eqref{tab:eight_cols_four_rows}, we have established a detailed comparison between our centralized BayesianMultisurv model and 5 other existing baseline models including its frequen5tist counterpart, Multisurv (\cite{vale2020multisurv}). We have considered 4 standard metrics: Concordance index(C-index) \cite{antolini2005time}, time dependent concordance index(ctd), Integrated Brier Score(IBS) \cite{graf1999assessment} and Integrated Negative Binomial Log Likelihood (INBLL) for measuring the performance of the models. From the subsequent results, we can see that the BayesianMultisurv model clearly outperformed all the baseline models. With the highest C-index of 0.752, it is only followed by Multisurv, whose highest C-index is 0.737, the rest of the baseline models are significantly lagging behind our BayesianMultisurv model in terms of C-index with the highest achieved C-index among them being 0.704, which is only comparable with the lowest achieved C-index of our centralized model. It is also prominently clear that BayesianMultisurv had superior performance over the other models in terms of the other metrics as well, with Multisurv remaining its closest competitor. Also, it is to be noted that we only have unimodal results for the baseline models except Multisurv, because these models are incapable of handling multimodal data.\vspace{0.5cm}\\
In Table \eqref{tab:long_format_epsilon}, we have established a detailed comparison between our VFL model BVFLMSP and our centralized model BayesianMultisurv in terms of their respective lowest recorded validation loss values and highest recorded validation C-index values for no privacy and four levels of privacy budgets. It is to be noted that we found the optimum learning rate for our centralized Bayesianmultisurv model to be 0.005 and that of our VFL model BVFLMSP was found to be 0.001. Now, we have trained the centralized model(for 2 modalities and 3 modalities) and  the VFL model(for 2 clients) for a total of 40 epochs but due to the prolonged computational time for VFL model with 3 clients, we have restricted its training to 30 epochs. Now, from the subsequent results, it is clear that the centralized model performs better than the VFL model in all the cases which can be attributed to the inherent characteristic of VFL models, that is information fragmentation. Furthermore, this is the same reason behind the centralized model being robust to the noise induced by embedding outputs perturbation but the VFL model is sensitive to the same, our VFl model performed pretty well for \(\epsilon=10\), in sections ahead in this paper, we would show that with this decent performance \(\epsilon=10\) still guarantees differential privacy.

\subsection{Analysis of Loss and Accuracy Curves for the Centralized Model}

From Figures \eqref{fig:centralized_comparison_one} and \eqref{fig:centralized_comparison_two}, we observe that for both modality settings (Clinical + miRNA and Clinical + miRNA + DNAm), the validation loss consistently decreases and the validation concordance index (C-index) increases as training progresses. This trend is observed for all privacy budgets ($\epsilon = 0.5, 1, 1.5, 10$) as well as for the non-private setting. This shows that the centralized BayesianMultisurv model converges stably even when differential privacy noise is added to the embeddings. As expected, stronger privacy (smaller $\epsilon$) leads to slightly higher validation loss and slightly lower C-index compared to the non-private setting. However, the performance gap is small, which indicates a good privacy utility trade off.

The stable training behavior can be explained by two main reasons.

\textbf{Joint learning across modalities:}  
The centralized model jointly learns from all modalities, which provides richer and more complementary information. This helps the model compensate for the noise added for privacy preservation.

\textbf{Bayesian robustness to noise:}  
The Bayesian formulation is naturally robust to uncertainty and noise. This allows the model to absorb the embedding perturbation noise without destabilizing the training process.\vspace{0.5cm}\newline
Also, the difference in performance level for different privacy budget becomes comparatively clearer for the case with 3 modalities, because of the increase in cumulative feature space, subsequently resulting in larger volume of noise that the model has to deal with. Even then, the gap in performance level is not much significant, reaffirming our aforementioned statement about the stable behavior of training in our centralized model. 
Overall, these results confirm that adding differential privacy noise at the embedding level does not break the convergence of the centralized model and only causes a mild degradation in performance.
\begin{figure}[h]
    \centering
    \begin{subfigure}[H]{0.49\textwidth} 
        \centering
        \includegraphics[width=\linewidth]{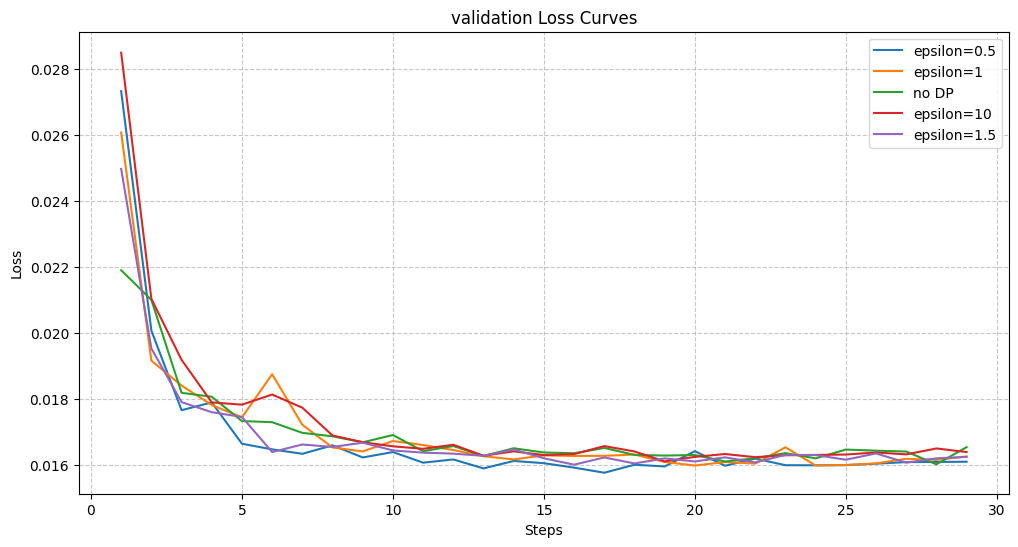}
        \caption{Loss curves} 
        \label{fig:epochs}
    \end{subfigure}
    \hfill 
    \begin{subfigure}[H]{0.49\textwidth}
        \centering
        \includegraphics[width=\linewidth]{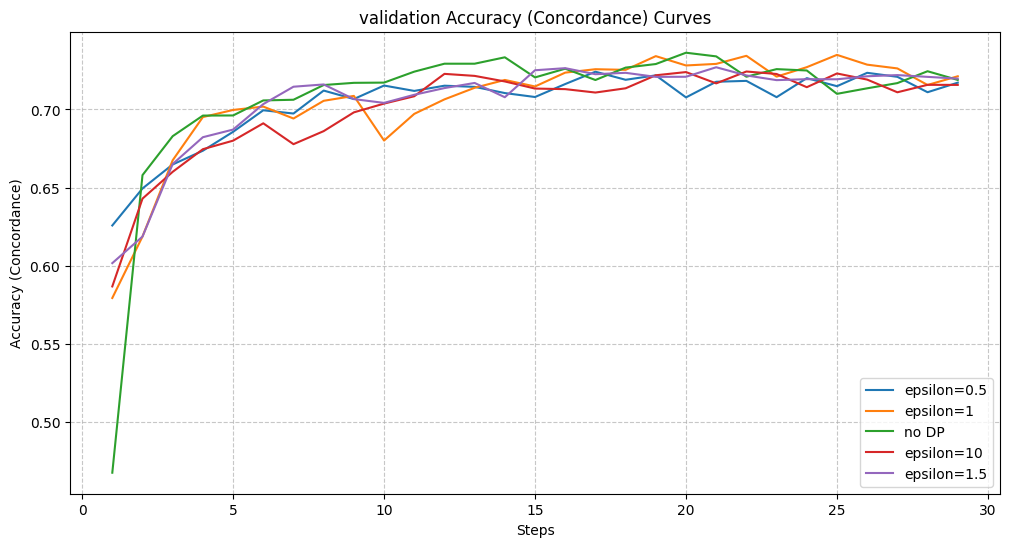}
        \caption{Accuracy curves} 
        \label{fig:time}
    \end{subfigure}
    
    \caption{Loss and accuracy curves for two modalities (Clinical + miRNA).}

    \label{fig:centralized_comparison_one}
\end{figure}
\begin{figure}[h]
    \centering
    \begin{subfigure}[H]{0.49\textwidth} 
        \centering
        \includegraphics[width=\linewidth]{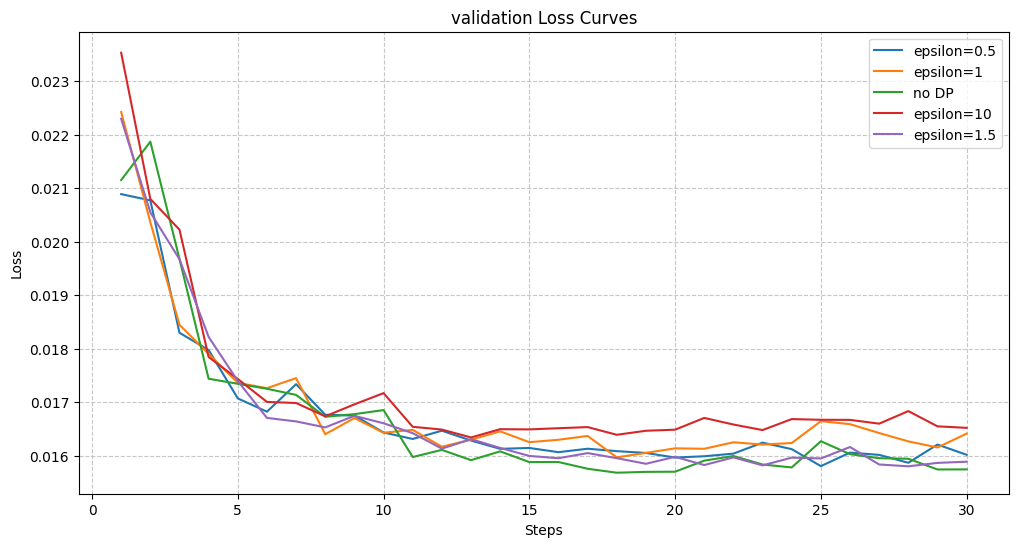}
        \caption{Loss curves} 
        \label{fig:epochs}
    \end{subfigure}
    \hfill 
    \begin{subfigure}[H]{0.49\textwidth}
        \centering
        \includegraphics[width=\linewidth]{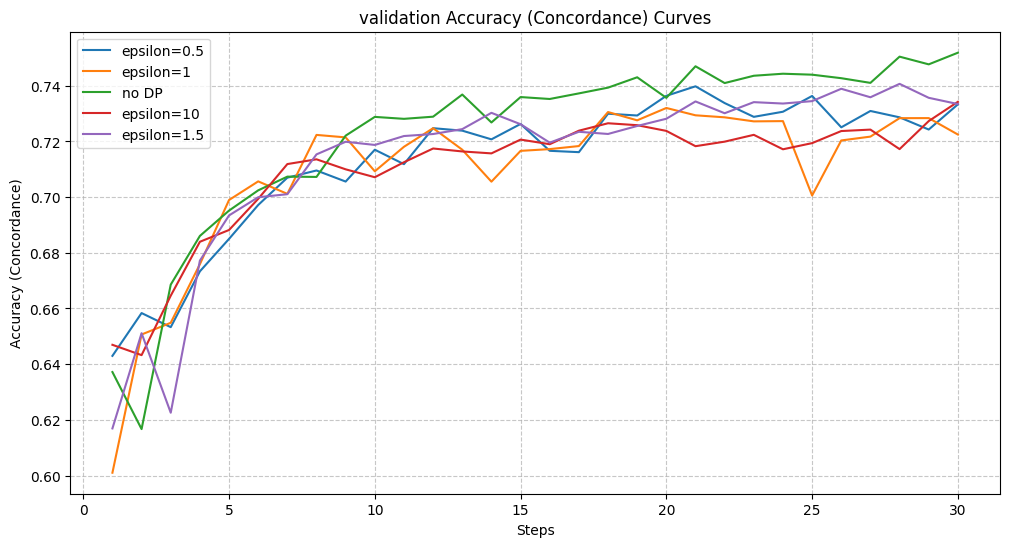}
        \caption{Accuracy curves} 
        \label{fig:time}
    \end{subfigure}
    \caption{Loss and accuracy curves for three modalities (Clinical + miRNA + DNAm).}

    \label{fig:centralized_comparison_two}
\end{figure}

\subsection{Analysis of Loss and Accuracy Curves for the Vertical Federated Learning Model}
We have considered a scenario where the hospital and pathological labs are the clients that hold different types of data for the same set of patients. The hospital holds the clinical data of the patients, one pathological lab holds the miRNA data and the other pathological lab holds the DNAm data of the patients and they run their data type specific individual sub-models. There is a server/coordinator which holds the labels file, the fusion layer, the final prediction head, risk layer and the optimizer.
In the VFL setting, each client (e.g., hospital, miRNA lab, and DNAm lab) trains its own local sub-model using only its private modality. Each client then sends a perturbed embedding to the server, where the global prediction model is trained.

Compared to the centralized case, the VFL model faces additional challenges due to data partitioning across clients and noise added to the shared embeddings for privacy preservation. Since no single client has access to complete information, the server receives partial and noisy representations, which makes optimization more difficult.

As a result, the VFL model shows slightly slower convergence and slightly lower C-index, especially under stricter privacy budgets (smaller $\epsilon$). This behavior is expected because stronger privacy introduces more noise in the embeddings, which reduces the effective information content available to the server model.

In addition, the VFL setup introduces communication noise and synchronization effects between clients and the server, which can further slow down convergence. Despite these challenges, the validation loss still decreases and the C-index still increases over training epochs, indicating stable training behavior.

These results show that the proposed embedding level differential privacy mechanism is compatible with VFL training. Although there is a small performance drop compared to the centralized setting, the model still converges reliably and achieves reasonable accuracy under strong privacy constraints.

\begin{figure}[h]
    \centering
    \begin{subfigure}[H]{0.49\textwidth} 
        \centering
        \includegraphics[width=\linewidth]{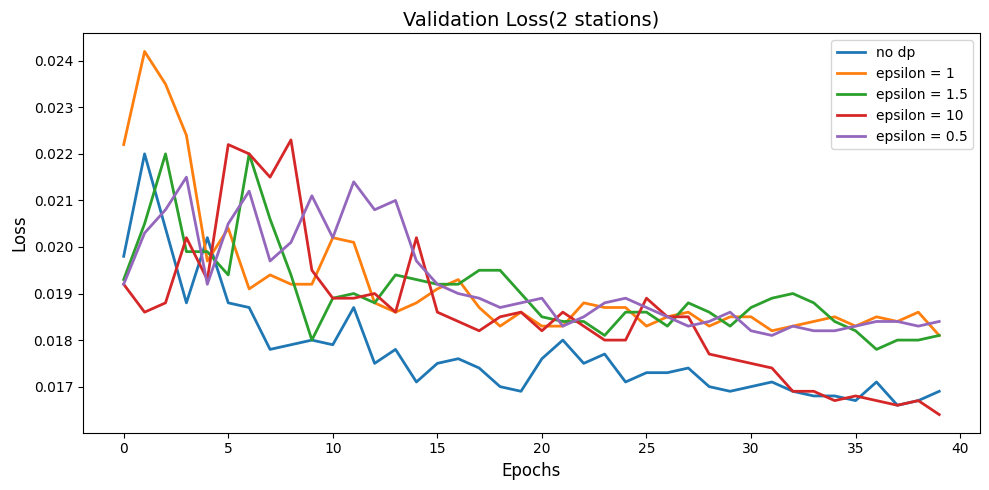}
        \caption{Loss curves} 
        \label{fig:epochs}
    \end{subfigure}
    \hfill 
    \begin{subfigure}[H]{0.49\textwidth}
        \centering
        \includegraphics[width=\linewidth]{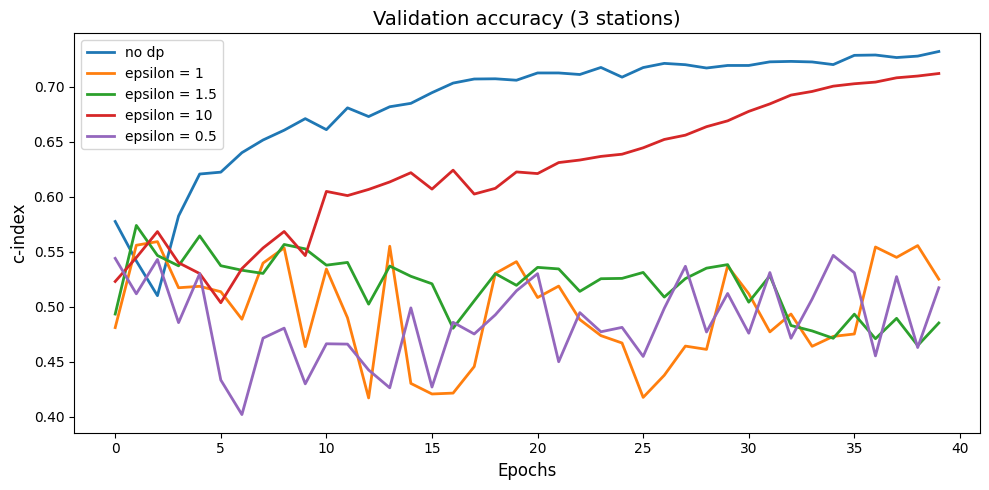}
        \caption{Accuracy curves} 
        \label{fig:time}
    \end{subfigure}
    \caption{Loss and accuracy curves for two clients (Clinical + miRNA )}
    \label{fig:non centralized_comparison_one}
\end{figure}

\begin{figure}[h]
    \centering
    \begin{subfigure}[H]{0.49\textwidth} 
        \centering
        \includegraphics[width=\linewidth]{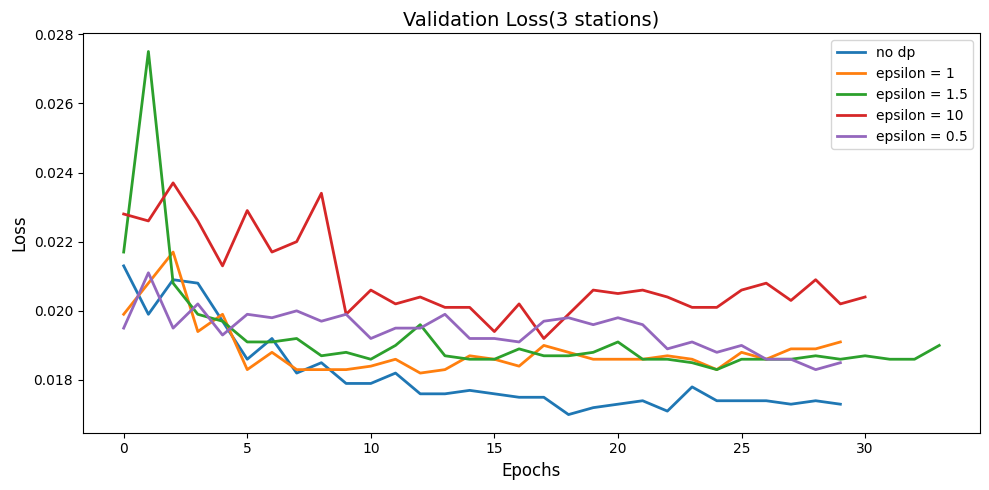}
        \caption{Loss curves} 
        \label{fig:epochs}
    \end{subfigure}
    \hfill 
    \begin{subfigure}[H]{0.49\textwidth}
        \centering
        \includegraphics[width=\linewidth]{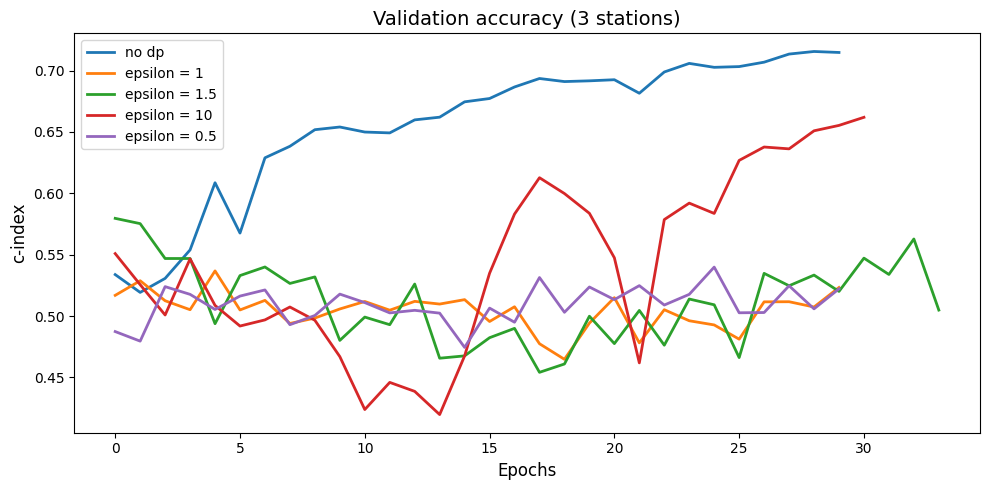}
        \caption{Accuracy curves} 
        \label{fig:time}
    \end{subfigure}
    \caption{Loss and accuracy curves for three clients (Clinical + miRNA + DNAm)}
    \label{fig:non centralized_comparison_two}
\end{figure}

In Figure \eqref{fig:non centralized_comparison_one} and \eqref{fig:non centralized_comparison_two} we can observe that for privacy budget of (\(\epsilon=10\)), the performance of differentially private BVFLMSP is almost as good as that for the non-private BVFLMSP model. Another noticeable thing is the fluctuating nature of the loss and accuracy curves for the case of the VFL model, which can be attributed to the communication noise and synchronization effects present in the VFL model as mentioned above.

\subsection{Distinguishability Between Embedding outputs of Different Patients}
This section evaluates the efficacy of the proposed defense mechanism by processing some patient samples through a client model in both protected and unprotected states. The primary objective is to determine whether the defense successfully prevents a malicious server from receiving distinct embedding outputs, thereby complicating unauthorized data recovery. For visualization purposes, we employed a perturbation level of $\epsilon = 10$. As this represents the lower bound of noise integration within our study at which BVFLMSP maintained highest performance among all privacy budgets. So, if differential privacy is preserved at this threshold, it is inherently guaranteed for all larger noise parameters that we investigated.
\begin{figure}[h]
    \centering
    \begin{subfigure}[H]{0.49\textwidth} 
        \centering
        \includegraphics[width=\linewidth]{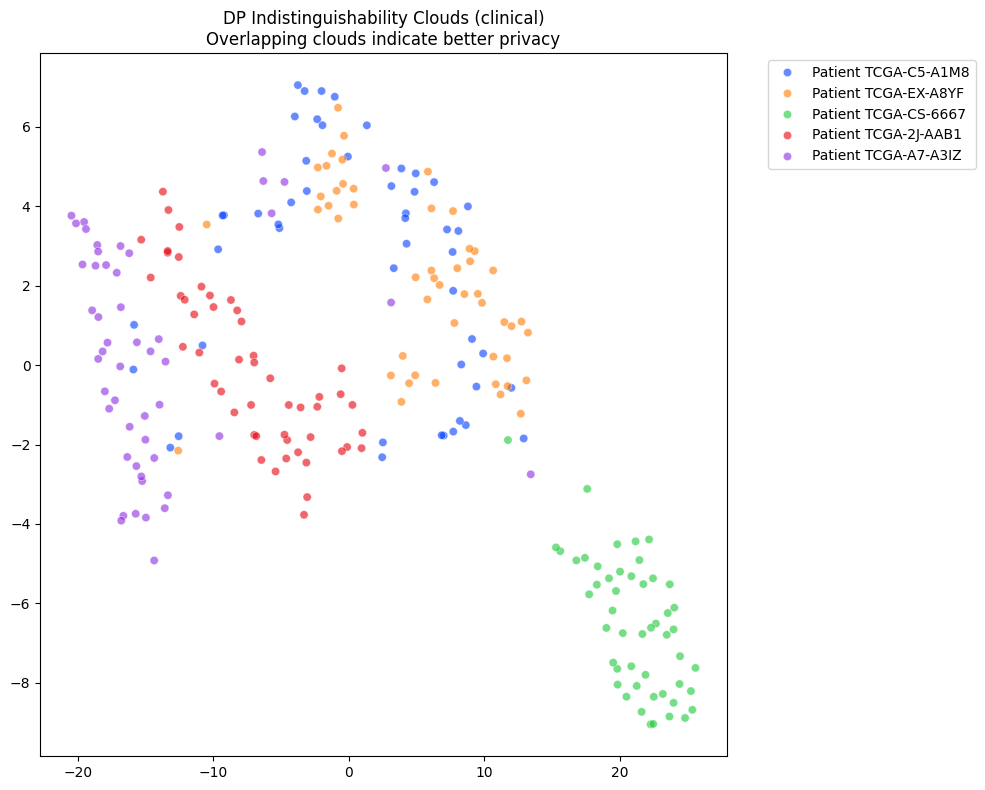}
        \caption{Without embedding output perturbation} 
        \label{fig:epochs}
    \end{subfigure}
    \hfill 
    \begin{subfigure}[H]{0.49\textwidth}
        \centering
        \includegraphics[width=\linewidth]{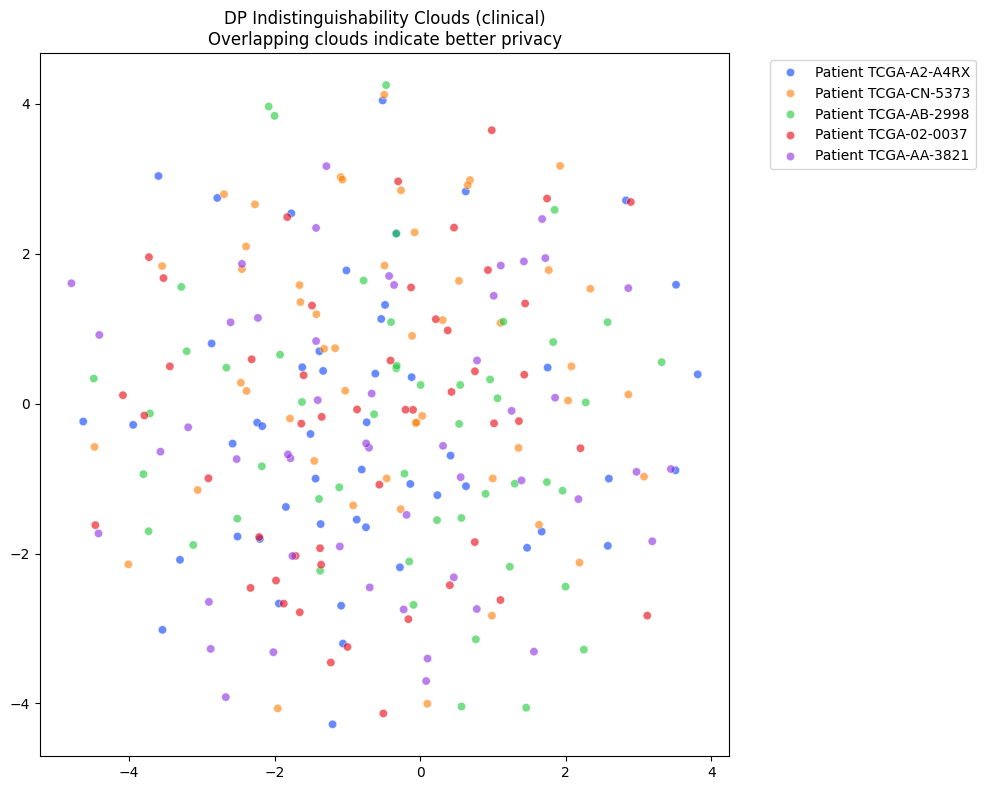}
        \caption{With embedding output perturbation} 
        \label{fig:time}
    \end{subfigure}
    \caption{Visual proof of model's differential privacy}
    \label{fig:centralized_comparison}
\end{figure}
We can clearly see that when embedding output perturbation is not applied, the embedding outputs of different patients formed distinct separate clusters of points, showing that their embedding outputs are clearly distinct. On the other hand, when embedding output perturbation is applied, the embedding outputs of different patients formed a single large gathering of points, the embedding outputs of different patients are mixed together with no clear boundary separating them. This shows how embedding output perturbation guarantees differential privacy.\newline
We draw another visualization as in Figure \eqref{fig:my_image} to visually differentiate between the inherent Bayesian noise that the client models have to deal with and the total noise (Bayesian+differential privacy) that the server has to deal with.
\begin{figure}[H]
    \centering
    \includegraphics[width=0.5\textwidth]{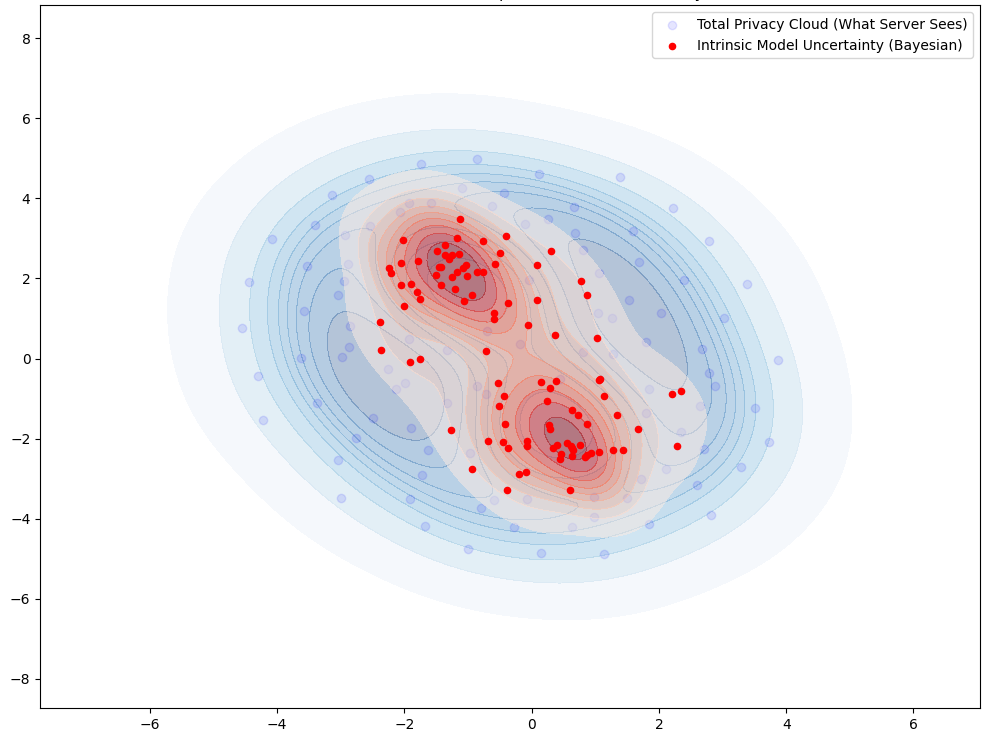}
    \caption{Composition of noise (Bayesian and Embedding output perturbation)}
    \label{fig:my_image}
\end{figure}
The noise due to inherent model uncertainty is highlighted as red and the total noise that the server has to deal with is highlighted as the blue circle containing the red region. This shows that the server has to deal with much higher amount of ambiguity further reassuring safety from privacy attack from server.

\section{Ethical Concerns}


Vertical federated learning (VFL) is an area of ongoing research and is facing continuous advancements. VFL and its increasing advancements are opening wide areas for different types of data collection for organizations to train their AI/ML models for training survival models for industrial reliability problems, financial risk managements, survival analysis for medical prognosis, etc. But these advancements poses some serious ethical questions, to what extent is it ethical to collect consumers' data. Let us consider a problem of reliability in financial sector, where a digital credit company wants to train a model on how much time consumers take to return the credit or fail to do so. For this, is it ethical to collect personal financial data of consumers including personal purchases, family spending and other details, or suppose for forming insurance policy an agency trains model based on financial data of consumers, but is it ethical and if so, to what extent. Several consumer protection legislations mandates informing consumers about the criteria impacting their insurance pricing to protect them from discrimination based pricing, but the model trainings remain an opaque space for the consumers. In this sector the strong quality of uncertainty quantification of Bayesian models can be used to produce models that are ethical enough, that is we only consider data that does not breach the consumers' privacy and also ensure reliability of the models, because the Bayesian models can account for the uncertainty introduced by the absence of certain consumer datasets. 

In our experiments on cancer survival data, clinical features contain more sensitive patient information. To follow ethical considerations, we study the effect of removing the clinical modality during training. We train two model setups: \textit{(i)} Clinical + miRNA + DNAm, and \textit{(ii)} miRNA + DNAm only. For a patient with a true survival time of 24.8 years, we pass the same input through each model 100 times using stochastic forward passes. This gives 100 predictions for each setup. Using these predictions, in the following plots we compare the predictive uncertainty of the two setups in three time intervals: early, middle, and late.

\begin{figure}[H]
    \centering
    \begin{subfigure}[H]{0.49\textwidth} 
        \centering
        \includegraphics[width=\linewidth]{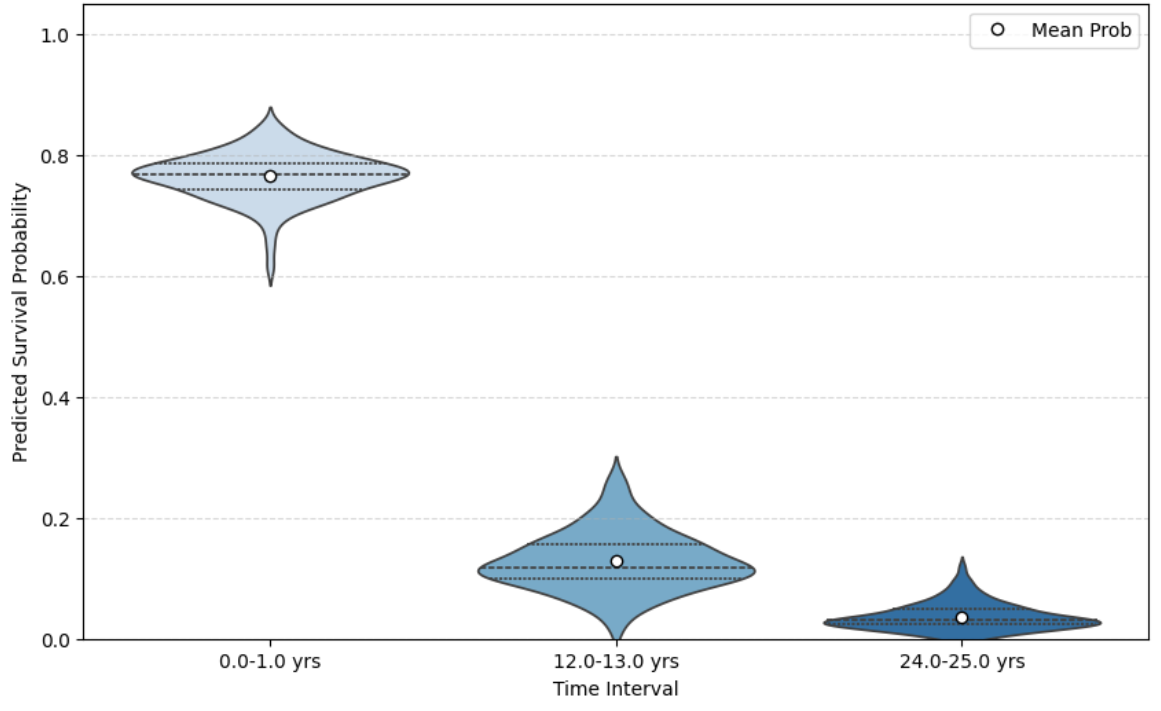}
        \caption{Clinical+DNA+miRNA} 
        \label{fig:epochs}
    \end{subfigure}
    \hfill 
    \begin{subfigure}[H]{0.49\textwidth}
        \centering
        \includegraphics[width=\linewidth]{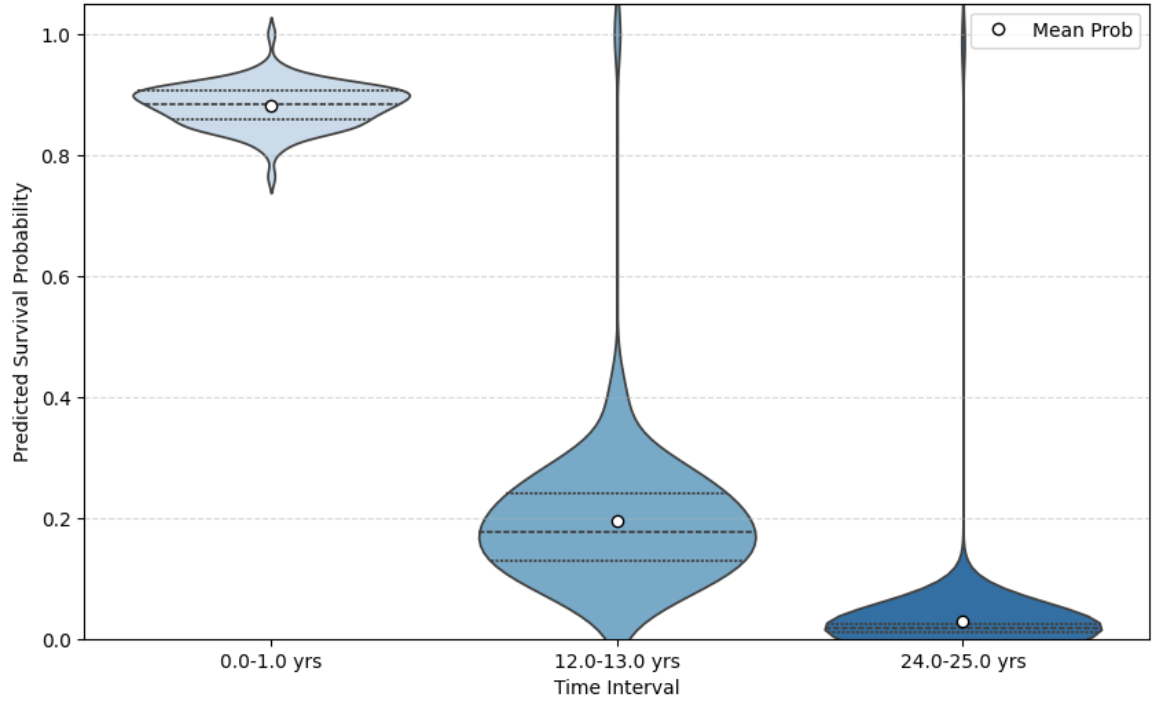}
        \caption{miRNA+DNA} 
        \label{fig:time}
    \end{subfigure}
    
    \caption{Uncertainty comparison}
    \label{fig:centralized_comparison}
\end{figure}

From the  violin plot given in Figure \ref{fig:centralized_comparison}, we observe that the predicted probabilities in the setup without clinical data show a wider spread compared to the setup that includes clinical data. This indicates a higher uncertainty in the model predictions due to the absence of complementary clinical information. We can also notice that the outputs of the model remained roughly the same. This shows that the Bayesian model's generalization and uncertainty quantification ability together have the ability to produce reliable results without violating consumer privacy.

\section{Conclusions and Future Work}

Survival analysis plays a central role in high stakes decision making across healthcare, biomedical research, and industrial reliability. The growing demand for accurate, robust, and data-efficient models has motivated the incorporation of heterogeneous and high-dimensional modalities, giving rise to multimodal survival modeling. At the same time, the need to enable collaborative learning across decentralized and siloed data sources has led to the emergence of federated learning and, more specifically, vertical federated learning (VFL). However, these collaborative paradigms introduce new privacy risks, particularly in the presence of curious or malicious servers.

In this work, we proposed a centralized Bayesian multimodal neural network for survival analysis and extended it to a SplitNN based VFL framework. To mitigate feature reconstruction and data recovery attacks, we introduced a client-side differentially private embedding perturbation mechanism. Our empirical results demonstrate that the centralized Bayesian multimodal model consistently outperforms strong baseline methods in survival prediction, achieving up to a 2.3\% relative improvement in C-index over the best multimodal baseline. Furthermore, we show that the proposed defense mechanism in the VFL setting provides formal differential privacy guarantees against feature reconstruction attacks under a semi-honest server threat model.

This work contributes one of the first end to end implementations of privacy preserving VFL for multimodal survival analysis, an area that remains underexplored in the literature. The Bayesian formulation further enhances robustness by enabling principled uncertainty quantification and resilience to missing modalities. In particular, our experiments indicate that the model can maintain competitive predictive performance even when certain modalities are absent, while explicitly capturing the uncertainty induced by missing information.

Future work will extend the proposed framework to incorporate imaging modalities (e.g., histopathology and radiology), which were not considered in this study but are central to many real world survival analysis applications. We also aim to investigate robustness under stronger adversarial threat models and study adaptive privacy utility trade offs for different privacy budgets. Our Bayesian framework can handle missing modalities and quantify uncertainty without a large drop in performance.
This opens future work on fairness, bias, and responsible use of VFL-based survival models in sensitive domains such as healthcare.



\bibliographystyle{unsrtnat}
\bibliography{references}  






\appendix
\section{Appendix}
\label{app:proof}

\begin{proof}[Proof of Theorem~\ref{thm:main}.]

We consider the optimization problem,
\begin{equation*}
\min_{\Phi}
 J(\Phi) = \frac{1}{N} \sum_{i=1}^{N}
\ell \Big(
f\big( E'^{(1)}_i, \dots, E'^{(M)}_i \big),
y_i
\Big),
\label{eq:objective}
\end{equation*}

where $\Phi = [(\theta_1)^{T}, \dots, (\theta_M)^{T}, (\theta_s)^{T}]^{T}$ denotes the
parameters of the model. $\theta_1, \dots, \theta_M$ are the local client parameters and $\theta_s$ is the central server parameters. $\Phi^*$ is optimal parameters.
The embedding of $k$-th client for the $i$-th sample is given by $E^{(k)}_i = f_k(x^{(k)}_i; \theta_k)$. Then
\begin{equation*}
    E'^{(k)}_i = E^{(k)}_i + \xi^{(k)}_i, 
\end{equation*}
where $\xi^{(k)}_i$ is the Gaussian noise, independent across clients, added to the embedding of
client $k$ for sample $i$ with mean zero and variance $(\sigma_d^{k})^2$ of the $d$-th element of the embedding. $D_k^{Embedding}$ is the dimension of the embedding of the $k$-th client. \\
The empirical risk is approximately defined as
\begin{equation*}
J(\Phi)
\cong
\frac{1}{N} \sum_{i=1}^N
F(E^{(1)}_i+\xi^{(1)}_i, \ldots,
E^{(M)}_i+\xi^{(M)}_i\big).
\end{equation*}

$F(\cdot)$ be a twice differentiable function. Using a first-order Taylor expansion of the aggregation function
$F(\cdot)$ around $E_i^{(k)}$ \cite{gai2025differentially}, we obtain
\begin{equation*}
F(E^{(1)}_i+\xi^{(1)}_i, \ldots,
E^{(M)}_i+\xi^{(M)}_i)
=
F(E^{(1)}_i, \ldots,
E^{(M)}_i)
+ \sum_{k=1}^M
\xi^{(k)}_i
\frac{\partial F}{\partial E^{(k)}_i}
+ O(\|\xi\|).
\label{eq:taylor}
\end{equation*}
We consider the higher order terms in this Taylor expansion negligible relative to the first order term because $\xi^{(k)}_i$s are small enough so that higher order terms are smaller than the linear term.

We simply assume that $\tilde{F} = F(E^{(1)}_i+\xi^{(1)}_i, \ldots, E^{(M)}_i+\xi^{(M)}_i)$. Then, we can write the partial gradient $\tilde{\nabla}_{\theta_k} J(\Phi)$ as:

\begin{align*}
\tilde{\nabla}_{\theta_k} J(\Phi)
&= \frac{1}{N} \sum_{i=1}^N
\left(
\frac{\partial \tilde{F}}{\partial E'^{(k)}_i}
\right)
\frac{\partial E'^{(k)}_i}{\partial \theta_k} \\
&= \frac{1}{N} \sum_{i=1}^N
\left(
\frac{\partial \tilde{F}}{\partial E^{(k)}_i}
\right)
\frac{\partial E^{(k)}_i}{\partial \theta_k} 
\quad \text{Since $\xi^{(k)}_i$ is noise and does not depend on $\theta_k$}\\
&= \frac{1}{N} \sum_{i=1}^N
\left(
\frac{\partial F(E^{(1)}_i, \ldots, E^{(M)}_i)}{\partial E^{(k)}_i}
+ \xi^{(k)}_i
\frac{\partial^2 {F}}{\partial (E^{(k)}_i)^2}
\right)
\frac{\partial E^{(k)}_i}{\partial \theta_k} \\
&= \nabla_{\theta_k} J(\Phi)
+ 
\frac{1}{N} \sum_{i=1}^N
\xi^{(k)}_i
\frac{\partial^2 {F}}{\partial (E^{(k)}_i)^2}
\frac{\partial (E^{(k)}_i)}{\partial \theta_k}.
\end{align*}


So, the partial gradient with respect to $\theta_k$ can be written as
\begin{equation}
\tilde{\nabla}_{\theta_k} J(\Phi)
=
\nabla_{\theta_k} J(\Phi) + a_r^k,
\label{eq:noisy_grad}
\end{equation}
where, $\nabla_{\theta_k} J(\Phi)$ is the gradient without noise. And the deviation caused by embedding noise is
\begin{equation*}
a_r^k
=
\frac{1}{N} \sum_{i=1}^N
\xi^{(k)}_i
\frac{\partial^2 {F}}{\partial (E^{(k)}_i)^2}
\frac{\partial (E^{(k)}_i)}{\partial \theta_k}.
\label{eq:or}
\end{equation*}

We define 
\begin{equation*}
U_i =
\frac{\partial^2 {F}}{\partial (E_i^{(k)})^2}
\frac{\partial (E_{i}^{(k)})}{\partial \theta_k}.
\end{equation*}

Taking the expectation of $\!\|a_r^k\|^2$ \cite{gai2025differentially}, we get
\begin{align*}
\mathbb{E}\!\left[\|a_r^k\|^2\right]
&= \frac{1}{N^2}
\mathbb{E}\!\left[
\left\|
\sum_{i=1}^N \xi^{(k)}_i U_i
\right\|^2
\right] \nonumber \\
&= \frac{1}{N^2}
\mathbb{E}\!\left[
\left(
\sum_{i=1}^N \xi^{(k)}_i U_i
\right)^{T}
\left(
\sum_{j=1}^N \xi^{(k)}_j U_j
\right)
\right] \nonumber \\
&= \frac{1}{N^2}
\mathbb{E}\!\left[
\sum_{i=1}^N \sum_{j=1}^N
(\xi^{(k)}_i U_i)^{T}
(\xi^{(k)}_j U_j)
\right] \nonumber \\
&= \frac{1}{N^2}
\sum_{i=1}^N \sum_{j=1}^N
\mathbb{E}\!\left[
(\xi^{(k)}_i U_i)^{T}
(\xi^{(k)}_j U_j)
\right] \nonumber \\
&= \frac{1}{N^2}
\sum_{i=1}^N
\mathbb{E}\!\left[
\|\xi^{(k)}_i\|^2 \|U_i\|^2
\right] \nonumber \quad \text{Since $\xi^{(k)}_i$ and $\xi^{(k)}_j$ for $i \ne j$.} \\
&= \frac{1}{N^2}
\sum_{i=1}^N
\|U_i\|^2
\mathbb{E}\!\left[
\|\xi^{(k)}_i\|^2
\right] \nonumber \\
&= \frac{1}{N^2}
\sum_{i=1}^N
\left(
\sum_{d=1}^{d^{Embedding}} (\sigma_d^k)^2
\right)
\|U_i\|^2 .
\end{align*}

Since $\xi^{(k)}_i$s are independent zero-mean Gaussian noises with the same variance as $\sigma^2C^2$, the
expected squared norm of $o_r^k$ is written as
\begin{align*}
\mathbb{E}\!\left[\|a_r^k\|^2\right]
=
\frac{1}{N^2}
\sum_{i=1}^N
\left(
\sum_{d=1}^{d^{Embedding}} (\sigma_d^k)^2
\right)
\|U_i\|^2 \\
= \frac{1}{N^2}
d^{Embedding}*\sigma^2C^2
\sum_{i=1}^N \|U_i\|^2.
\label{eq:or_bound}
\end{align*}

Let $\Phi^{(e+1)} = \left[ (\theta_1^{(e+1)})^{T}, \ldots, (\theta_M^{(e+1)})^{T} \right]^{T}$, 
where $\Phi^{(e+1)}$ denotes the local client parameters for the $(e+1)$-th epoch. Then, the model parameters are updated as
\begin{equation*}
\Phi^{(e+1)} = \Phi^{(e)} - \eta \tilde{\nabla} J(\Phi^{(e)}).
\end{equation*}

Here
\begin{equation}
    \tilde{\nabla} J(\Phi^{(e)}) =
    \underbrace{
    \left[
    (\nabla_{\theta_1^{(e)}} J(\Phi^{(e)}))^{T},
    \ldots,
    (\nabla_{\theta_M^{(e)}} J(\Phi^{(e)}))^{T}
    \right]^{T}
    }_{\nabla J(\Phi^{(e)})}
    +
    \underbrace{
    \left[
    (a_1^{r})^{T},
    \ldots,
    (a_M^{r})^{T}
    \right]^{T}
    }_{a^{r}}.
    \label{eq:total_loss}
\end{equation}


From Assumption 5, we have
\begin{align*}
J(\Phi^{(e+1)})
&\le
J(\Phi^{(e)})
+ \nabla J(\Phi^{(e)})^{T} (\Phi^{(e+1)} - \Phi^{(e)})
+ \frac{\beta}{2} \|\Phi^{(e+1)} - \Phi^{(e)}\|^2 \nonumber \\
&=
J(\Phi^{(e)})
- \eta \nabla J(\Phi^{(e)})^{T} \tilde{\nabla} J(\Phi^{(e)})
+ \frac{\beta \eta^2}{2}
\|\tilde{\nabla} J(\Phi^{(e)})\|^2 
\\
&= 
J(\Phi^{(e)})
- \eta \nabla J(\Phi^{(e)})^{T} ({\nabla} J(\Phi^{(e)}) + a_r)
+ \frac{\beta \eta^2}{2}
\|{\nabla} J(\Phi^{(e)}) + a_r\|^2
\\
&=
J(\Phi^{(e)})
- \eta \nabla J(\Phi^{(e)})^{T} {\nabla} J(\Phi^{(e)}) - \eta \nabla J(\Phi^{(e)})^{T}a_r \\
& \quad + \frac{\beta \eta^2}{2}
(\|{\nabla} J(\Phi^{(e)})\|^2 + \|a_r\|^2 + 2{\nabla} J(\Phi^{(e)})^{T}a_r).
\end{align*}

Setting the learning rate $\eta = 1/\beta$, we get
\begin{align*}
J(\Phi^{(e+1)})
&\le
J(\Phi^{(e)})
- \frac{1}{\beta} \nabla J(\Phi^{(e)})^{T} {\nabla} J(\Phi^{(e)}) - \frac{1}{\beta} \nabla J(\Phi^{(e)})^{T}a_r \\
& \quad + \frac{1}{2\beta}
(\|{\nabla} J(\Phi^{(e)})\|^2 + \|a_r\|^2 + 2{\nabla} J(\Phi^{(e)})^{T}a_r)
\\
&=
J(\Phi^{(e)})
- \frac{1}{\beta} \|\nabla J(\Phi^{(e)})\|^2
- \frac{1}{\beta} \nabla J(\Phi^{(e)})^{T}a_r
+ \frac{1}{2\beta}
\|{\nabla} J(\Phi^{(e)})\|^2 + \frac{1}{2\beta} \|a_r\|^2 + \frac{1}{\beta} {\nabla} J(\Phi^{(e)})^{T}a_r
\\
&=
J(\Phi^{(e)})
- \frac{1}{2\beta} \|\nabla J(\Phi^{(e)})\|^2
+ \frac{1}{2\beta} \|a_r\|^2
\\
&=
J(\Phi^{(e)})
+ \frac{1}{2\beta} (\|a_r\|^2 - \|\nabla J(\Phi^{(e)})\|^2).
\end{align*}

We can write
\begin{equation}
J(\Phi^{(e+1)})
\le
J(\Phi^{(e)})
+ \frac{1}{2\beta}
\big(
\|a_r\|^2 - \|\nabla J(\Phi^{(e)})\|^2
\big).
\label{eq:descent}
\end{equation}

From Assumption~4 and similar calculation of equation \eqref{eq:descent}, we have
\begin{align*}
J(\Phi^*)
&\ge
J(\Phi^{(e)})
- \frac{1}{2\alpha}
\left\|
\nabla J(\Phi^{(e)})
\right\|^2
+ \frac{1}{2\alpha}
\left\|
a_r
\right\|^2 \nonumber \\
&\ge
J(\Phi^{(e)})
- \frac{1}{2\alpha}
\left\|
\nabla J(\Phi^{(e)})
\right\|^2 .
\end{align*}

Then we can write
\begin{equation}
\|\nabla J(\Phi^{(e)})\|^2
\ge
2\alpha \big(J(\Phi^{(e)}) - J(\Phi^*)\big).
\label{eq:grad_lower}
\end{equation}

From equation \eqref{eq:descent} and equation \eqref{eq:grad_lower}, we get

\begin{align*}
J(\Phi^{(e+1)})
&\le
J(\Phi^{(e)})
+ \frac{1}{2\beta}
\big(
\|a_r\|^2 - \|\nabla J(\Phi^{(e)})\|^2
\big)
\\
&\le
J(\Phi^{(e)})
+ \frac{1}{2\beta}
\big\{
\|a_r\|^2 - 2\alpha \big(J(\Phi^{(e)}) - J(\Phi^*)\big)
\big\}
\\
&=
J(\Phi^{(e)})
+ \frac{1}{2\beta}
\big
\|a_r\|^2 - \frac{\alpha}{\beta} \big(J(\Phi^{(e)}) - J(\Phi^*)\big)
\\
&=
J(\Phi^{(e)}) - \frac{\alpha}{\beta} \big(J(\Phi^{(e)}) - J(\Phi^*)\big) + \frac{1}{2\beta}
\big
\|a_r\|^2.
\end{align*}

Subtracting $J(\Phi^*)$ on both side, we can write
\begin{align}
J(\Phi^{(e+1)}) - J(\Phi^*)
&\le
J(\Phi^{(e)}) - J(\Phi^*)
- \frac{\alpha}{\beta}
\big( J(\Phi^{(e)}) - J(\Phi^*) \big)
+ \frac{1}{2\beta}
\left\| a_r \right\|^2 \nonumber \\
&=
\left(1 - \frac{\alpha}{\beta}\right)
\big( J(\Phi^{(e)}) - J(\Phi^*) \big)
+ \frac{1}{2\beta}
\left\| a_r \right\|^2.
\label{eq:rec_equ}
\end{align}

Recursively applying \eqref{eq:rec_equ}, we obtain
\begin{align}
J(\Phi^{L}) - J(\Phi^*)
\le\;&
\left(1 - \frac{\alpha}{\beta}\right)^{L}
\big( J(\Phi^{(0)}) - J(\Phi^*) \big) +
\frac{1}{2\beta}
\sum_{e=0}^{L-1}
\left(1 - \frac{\alpha}{\beta}\right)^{L-e-1}
\left\| a_r \right\|^2.
\label{eq:final_equ}
\end{align}

Taking the expectation on both sides of \eqref{eq:final_equ}, we have
\begin{align*}
\mathbb{E}\!\left[ J(\Phi^{L}) - J(\Phi^*) \right]
\le\;&
\left(1 - \frac{\alpha}{\beta}\right)^{L}
\mathbb{E}\!\left[ J(\Phi^{(0)}) - J(\Phi^*) \right] \nonumber 
+
\frac{1}{2\beta}
\sum_{e=0}^{L-1}
\left(1 - \frac{\alpha}{\beta}\right)^{L-e-1}
\mathbb{E}\!\left[ \left\| a_r\right\|^2 \right] \nonumber \\
\le\;&
\left(1 - \frac{\alpha}{\beta}\right)^{L}
\mathbb{E}\!\left[ J(\Phi^{(0)}) - J(\Phi^*) \right] \nonumber \\
&+
\frac{1}{2N^2\beta}
\sum_{e=0}^{L-1}
\left(1 - \frac{\alpha}{\beta}\right)^{L-e-1}
\sum_{i=1}^N \left(\sum_{k=1}^{M} d^{Embedding}*\sigma^2C^2\right)
\|U_i(e)\|^2.
\end{align*}

Here we denote $U_i$ by $U_i(e)$ because in each epoch $\theta_k$ becomes $\theta_k^{(e)}$. \\
In our model, we take the dimension of the final embedding of each of the client as 512 ($d^{Embedding} = 512$) and C is taken as 1. The value of $\sigma$ will be determined from Equation \eqref{eq:sigma}.
So, the above expression is written as

\begin{align*}
\mathbb{E}\!\left[ J(\Phi^{L}) - J(\Phi^*) \right]
\le\;&
\left(1 - \frac{\alpha}{\beta}\right)^{L}
\mathbb{E}\!\left[ J(\Phi^{(0)}) - J(\Phi^*) \right] \nonumber \\
&+
\frac{512*\sigma^2*M}{2N^2\beta}
\sum_{e=0}^{L-1}
\left(1 - \frac{\alpha}{\beta}\right)^{L-e-1}
\sum_{i=1}^N \|U_i(e)\|^2 \\
\le\;&
\left(1 - \frac{\alpha}{\beta}\right)^{L}
\mathbb{E}\!\left[ J(\Phi^{(0)}) - J(\Phi^*) \right] \nonumber \\
&+
\frac{256*\sigma^2*M}{N^2\beta}
\sum_{e=0}^{L-1}
\left(1 - \frac{\alpha}{\beta}\right)^{L-e-1}
\sum_{i=1}^N \|U_i(e)\|^2.
\end{align*}


From Assumption 5, we obtain, $|| \frac{\partial^2 {F}}{\partial (E_{i}^{(k)})^2}|| \le \beta$. 
Also, $\theta_k^{(e)}$s are shrunk by $L_2$ regularization \cite{goodfellow2016deep}, $||\frac{\partial (E_{i}^{(k)})}{\partial \theta_k^{(e)}}|| \le L_E$.

Then
\[|| U_i(e) || \le \beta * L_E.\]

So, the final expression is written as
\begin{align*}
\mathbb{E}\!\left[ J(\Phi^{L}) - J(\Phi^*) \right]
\le\;&
\left(1 - \frac{\alpha}{\beta}\right)^{L}
\mathbb{E}\!\left[ J(\Phi^{(0)}) - J(\Phi^*) \right] \nonumber \\
&+
\frac{256*\sigma^2*M}{N^2\beta}
\sum_{e=0}^{L-1}
\left(1 - \frac{\alpha}{\beta}\right)^{L-e-1}
\sum_{i=1}^N \|U_i(e)\|^2 \\
\le\;&
\left(1 - \frac{\alpha}{\beta}\right)^{L}
\mathbb{E}\!\left[ J(\Phi^{(0)}) - J(\Phi^*) \right] \nonumber \\
&+
\frac{256*\sigma^2*M}{N^2\beta}
\sum_{e=0}^{L-1}
\left(1 - \frac{\alpha}{\beta}\right)^{L-e-1}
* N * \beta * L_E \\
\le\;&
\left(1 - \frac{\alpha}{\beta}\right)^{L}
\mathbb{E}\!\left[ J(\Phi^{(0)}) - J(\Phi^*) \right] \nonumber \\
&+
\frac{256*\sigma^2*M*L_E*\beta}{N*\alpha}
*\left[1 - \left(1-\frac{\alpha}{\beta}\right)^L\right].
\end{align*}



Based on Assumptions 4 and 5, it implies that the term $(1- \frac{\alpha}{\beta})^L$ decays exponentially with respect to $L$. Then $\mathbb{E}\!\left[ J(\Phi^{L}) - J(\Phi^*) \right]$ approaches a bounded value with epochs.


\end{proof}

\end{document}